\definecolor{shadecolor}{gray}{0.9}
\theoremstyle:=definition,remark,plain\do{%
        \expandafter\g@addto@macro\csname th@\theoremstyle\endcsname{%
            \addtolength\thm@preskip\parskip
            }%
        }
\crefname{lemma}{lemma}{lemmas}
\Crefname{lemma}{Lemma}{Lemmas}
\crefname{thm}{theorem}{theorems}
\Crefname{thm}{Theorem}{Theorems}
\crefname{prop}{proposition}{propositions}
\Crefname{prop}{Proposition}{Propositions}
\crefname{assumption}{assumption}{assumptions}
\crefname{assumption}{Assumption}{Assumptions}
\newcommand\independent{\protect\mathpalette{\protect\independenT}{\perp}}
\def\independenT#1#2{\mathrel{\rlap{$#1#2$}\mkern2mu{#1#2}}}
\def\adl@drawiv#1#2#3{%
        \hskip.5\tabcolsep
        \xleaders#3{#2.5\@tempdimb #1{1}#2.5\@tempdimb}%
                #2\z@ plus1fil minus1fil\relax
        \hskip.5\tabcolsep}
\newcommand{\cdashlinelr}[1]{%
  \noalign{\vskip\aboverulesep
           \global\let\@dashdrawstore\adl@draw
           \global\let\adl@draw\adl@drawiv}
  \cdashline{#1}
  \noalign{\global\let\adl@draw\@dashdrawstore
           \vskip\belowrulesep}}
\renewcommand{\epsilon}{\varepsilon}
\declaretheorem[style=plain,name=Theorem]{theorem}
\declaretheorem[style=plain,sibling=theorem,name=Lemma]{lemma}
\declaretheorem[style=remark,sibling=theorem,name=Remark]{remark}
\newenvironment{example*}
 {\pushQED{\qed}\example}
 {\popQED\endexample}
\numberwithin{equation}{section}
\def\ceil#1{\lceil #1 \rceil}
\DeclarePairedDelimiterX\Set[1]{\lbrace}{\rbrace}%
{  #1 }
      \OR\ifentrytype{incollection}\OR\ifentrytype{inproceedings}%
      \OR\ifentrytype{inreference}}
\crefname{example}{Example}{Examples}
\crefname{lemma}{Lemma}{Lemmas}
\crefname{cor}{Corollary}{Corollaries}
\crefname{theorem}{Theorem}{Theorems}
\crefname{assumption}{Assumption}{Assumptions}
\theoremstyle{definition}
\newtheorem{definition}[theorem]{Definition}
\declaretheoremstyle[
spacebelow=\parsep,
    spaceabove=\parsep,
  mdframed={
    backgroundcolor=gray!10!white,     
    hidealllines=true, 
    innertopmargin=8pt, 
    innerbottommargin=4pt, 
    skipabove=8pt,
    skipbelow=10pt,
    nobreak=true
}
]{grayboxed}
\crefname{gassumption}{Assumption}{Assumptions}
\crefname{algocf}{algorithm}{algorithms}
\Crefname{algocf}{Algorithm}{Algorithms}
\definecolor{WowColor}{rgb}{.75,0,.75}
\definecolor{SubtleColor}{rgb}{0,0,.50}
\newcounter{margincounter}
\title{Uncovering Meanings of Embeddings\\via Partial Orthogonality}
\author[1]{Yibo Jiang}
\author[2]{Bryon Aragam}
\author[3,4]{Victor Veitch}
\affil[1]{Department of Computer Science, University of Chicago}
\affil[2]{Booth School of Business, University of Chicago}
\affil[3]{Department of Statistics, University of Chicago}
\affil[4]{Data Science Institute, University of Chicago}
\begin{document}

\maketitle

\begin{abstract}
Machine learning tools often rely on embedding text as vectors of real numbers.
In this paper, we study how the \emph{semantic} structure of language is encoded in the \emph{algebraic} structure of such embeddings.
Specifically, we look at a notion of ``semantic independence'' capturing the idea that, e.g., ``eggplant'' and ``tomato'' are independent given ``vegetable''. 
Although such examples are intuitive, it is difficult to formalize such a notion of semantic independence. The key observation here is that any sensible formalization should obey a set of so-called independence axioms, and thus any algebraic encoding of this structure should also obey these axioms. This leads us naturally to use \emph{partial orthogonality} as the relevant algebraic structure. We develop theory and methods that allow us to demonstrate that partial orthogonality does indeed capture semantic independence.
Complementary to this, we also introduce the concept of \emph{independence preserving embeddings} where embeddings preserve the conditional independence structures of a distribution, and we prove the existence of such embeddings and approximations to them.
\end{abstract}

\section{Introduction}
This paper concerns the question of how semantic meaning is encoded in neural embeddings, such as those produced by \citep{radford2021learning}.
There is strong empirical evidence that these embeddings---vectors of real numbers---capture the semantic meaning of the underlying text. For example, classical results show that word embeddings can be used for analogical reasoning \citep[e.g.,][]{mikolov2013efficient, pennington2014glove}, and such embeddings are the backbone of modern generative AI systems \citep[e.g.,][]{ramesh2022hierarchical, bubeck2023sparks, saharia2022photorealistic, devlin2018bert}.
The high-level question we're interested in is: \emph{How is the \textbf{semantic} structure of text encoded in the \textbf{algebraic} structure of embeddings?}
In this paper, we provide evidence that the concept of \emph{partial orthogonality} plays a key role.

The first step is to identify the semantic structure of interest.
Intuitively, words or phrases possess a notion of semantic independence, which does not have to be statistical in nature.
For example, the word ``eggplant'' seems more similar to ``tomato'' than to ``ennui''. Yet, if we were to ``condition" on the common property of ``vegetable'', then ``eggplant'' and ``tomato'' should be ``independent".
And, if we condition on both ``vegetable'' and ``purple'', then ``eggplant" may be ``independent'' of all other words.
However, it is difficult to formalize what is meant by ``independent" and ``condition on" in these informal statements. Accordingly, it is hard to establish a formal definition of semantic independence, and thus it is challenging to explore how this structure might be encoded algebraically!

The key observation in this paper is to recall that most reasonable concepts of ``independence'' adhere to a common set of axioms similar to those defining probabilistic conditional independence. Formally, this abstract idea is captured by the axioms of the so-called \emph{independence models} \cite{lauritzen1996graphical}. 
Thus, if semantic independence is encoded algebraically, it should be encoded as an algebraic structure that respects these axioms. 
In this paper, we use a natural candidate independence model in vector spaces known as \emph{partial orthogonality} \citep{lauritzen1996graphical,amini2022lattice}.
Here, for two vectors $v_a$ and $v_b$ and a conditioning set of vectors $v_C$, partial orthogonality takes $v_a$ independent $v_b$ given $v_C$ if the residuals of $v_a$ and $v_b$ are orthogonal after projecting onto the span of $v_C$. 
\emph{We discover that this particular tool is indeed valuable for understanding CLIP embeddings.} For instance, \cref{fig:eggplant-demo} shows that after projecting onto the linear subspace spanned by CLIP embeddings of ``purple'' and ``vegetable'', the residual of embedding ``eggplant'' has on average low cosine similarity with the residuals of random test embeddings, which also matches our intuitive understanding of the word.

\begin{figure}[t]
    \centering
  \begin{subfigure}{.4\linewidth}
    \centering
    \includegraphics[width=\linewidth]{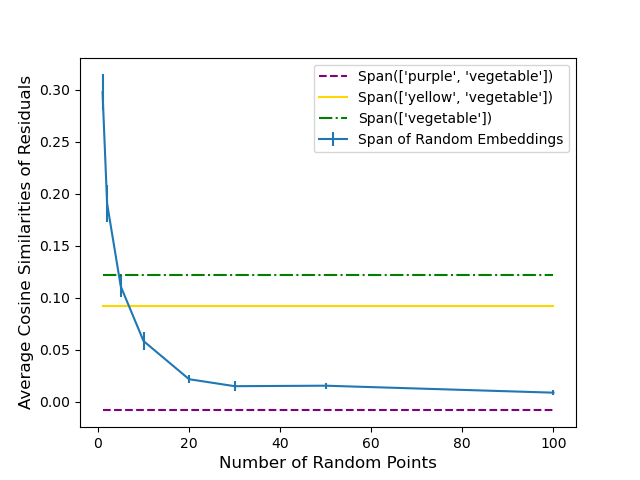}
    \caption{`eggplant'}
    \label{fig:subspace-sub1}
  \end{subfigure}%
  \begin{subfigure}{.4\linewidth}
    \centering
    \includegraphics[width=\linewidth]{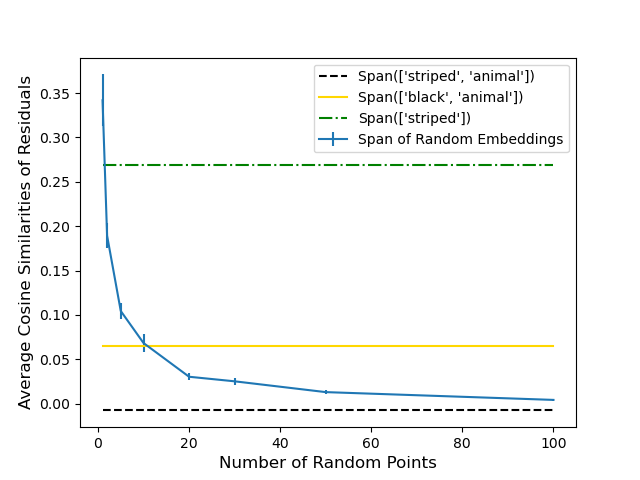}
    \caption{`zebra'}
    \label{fig:subspace-sub2}
  \end{subfigure}
    \caption{For target embedding of ``eggplant'', the set of embeddings that include ``purple'' and ``vegetable'' forms the subspace such that after projection, the residual of `eggplant'' has the lowest cosine similarity with residuals of other test embeddings. This matches our intuition for the meaning of ``eggplant''. Similarly, for target embedding of ``zebra'', the set of embeddings that include ``striped'' and ``animal'' forms the most suitable subspace.}
\label{fig:eggplant-demo}
\end{figure}

Since partial orthogonality is an independence model, 
we can go one step further to define \emph{Markov boundaries} for embeddings as well. Drawing inspiration from graphical models, it is reasonable to expect that the Markov boundary of any target embedding should constitute a minimal collection of embeddings that encompasses valuable information regarding the target. Unlike classical applications of partial orthogonality in regression and Gaussian models, however,
the geometry of embeddings presents several subtle technical challenges to
directly adopting the usual notion of Markov boundary. First, the \emph{intersection axiom} 
never holds for practical embeddings,
which makes the standard Markov boundary non-unique. More importantly, practical embeddings could potentially incorporate distortion, noise and undergo phenomena resembling superposition \citep{elhage2022superposition}. Therefore, in this paper, we introduce \emph{generalized Markov boundaries} for studying the structure of text embeddings.  

\looseness=-1
\paragraph{Contributions}
Specifically, we make the following contributions:
\begin{enumerate}
    \item We adapt ideas from graphical independence models to specify the structure that should be satisfied by semantic independence. We discover that partial orthogonality in the embedding space offers a natural way of encoding semantic independence structure (\cref{sec:prelim}). 

    \item We study the semantic structure of partial orthogonality via Markov boundaries. Due to the unique characteristics of embeddings and noise in learning, exact orthogonality is unlikely to hold. So, we give a distributional relaxation of the Markov boundary and use this to provide a practical algorithm for finding generalized Markov boundaries and measuring the semantic independence induced by generalized Markov boundaries (\cref{sec:find-mb}).

    \item We introduce the concept of \emph{independence preserving embeddings}, which studies how embeddings can be used to maintain the independence structure of distributions. This holds its own intrigue for further research (\cref{sec:ipe}).

    \item Finally, we design and conduct experimental evaluations on CLIP text embeddings, finding that the partial orthogonality structure and generalized Markov boundary encode semantic structure (\cref{sec:exp}).

\end{enumerate}

Throughout, we use CLIP text embeddings as a running example, though the method and theory presented can be applied more broadly. 

\paragraph{Related work}
There are many papers \citep[e.g.,][]{arora2016latent, gittens2017skip, allen2019analogies, ethayarajh2019understanding, trager2023linear, perera2023prompt, leemann2023post, merullo2023language, wang2023concept} connecting semantic meanings and algebraic structures of popular embeddings like 
CLIP \citep{radford2021learning}, Glove \citep{pennington2014glove} and word2vec \cite{mikolov2013efficient}.  Simple arithmetic on these embeddings reveals that they carry semantic meanings. The most popular arithmetic operation is called linear analogy \citep{ethayarajh2019understanding}. There are several papers trying to understand the reasoning behind this phenomenon. \Citet{arora2016latent} explains this by proposing the latent variable model but it requires the word vectors to be uniformly distributed in the embedding space which generally is not true in practice \citep{mimno2017strange}. Alternatively, \cite{gittens2017skip, allen2019analogies} adopts the paraphrase model that also does not fit practice. \cite{ethayarajh2019understanding}, on the other hand, studies the geometry of embeddings that decomposes the shifted pointwise mutual information (PMI) matrix. \Citet{trager2023linear, perera2023prompt} decomposes embeddings into combinations of a smaller set of vectors that are more interpretable. On the other hand, similar to using vector orthogonality to represent (conditional) independence, kernel mean embeddings \citep{muandet2017kernel} are Hilbert space embeddings of distributions that can also be used to represent conditional independences \citep{song2009hilbert, song2013kernel}. It is a popular method for machine learning, and causal inference \citep{gretton2005measuring, mooij2009regression, greenfeld2020robust}. But unlike independence preserving embeddings, kernel mean embeddings use the kernel and do not explicitly construct finite-dimensional vector representations.

\section{Independence Model and Markov Boundary}
\label{sec:prelim}
Let $\embSpace$ be a finite set of embeddings with $|\embSpace| = n$ and each embedding is of size $d$. Every embedding is a vector representation of a word. In other words, there exists a function $f$ that maps words to $n$ vectors in $\mathbb{R}^d$. As explained above, we might expect embeddings to encode ``independence structures'' between words. These independence structures are difficult to define formally, though the structure is similar to that of probabilistic conditional independence. We will use independence models as an abstract formalization of this structure.

\subsection{Independence Model}

Throughout this paper, we use many standard definitions and facts about graphical models and more generally, abstract independence models. A detailed overview of this material can be found, for instance, in \citep{lauritzen1996graphical, DBLP:books/daglib/0014380}.

Suppose $V$ is a finite set. In the case of embeddings, $V$ would be a set of vectors. An \emph{independence model} $\anyind$ is a ternary relation on $V$. Let $A, B, C, D$ be disjoint subsets of $V$. Then a \textit{semi-graphoid} is an independence model that satisfies the following axioms:
\begin{enumerate}[label*=(A\arabic*)]
    \item (Symmetry) If $A \anyind B \vert C$, then $B \independent_{\sigma} A \vert C$; 
    \item (Decomposition) If $A \anyind (B \cup D) \vert C$, then $A \anyind B \vert C$ and  $A \anyind D \vert C$;
    \item (Weak Union) If $A \anyind (B \cup D) \vert C$, then $A \anyind B | (C \cup D)$;
    \item (Contraction) If $A \anyind B \vert C$ and $A \anyind D \vert (B \cup C)$, then $A \anyind (B \cup D) \vert C$. 
\end{enumerate}
The independence model is a \textit{graphoid} if it also satisfies
\begin{enumerate}[label*=(A\arabic*)]
\setcounter{enumi}{4}
\item (Intersection) If $A \anyind B \vert (C \cup D)$ and $A \anyind C \vert (B \cup D)$, then $A \anyind (B \cup C) \vert D$.
\end{enumerate}
And, the graphoid is called a \textit{compositional graphoid} if it also satisfies
\begin{enumerate}[label*=(A\arabic*)]
\setcounter{enumi}{5}
\item (Composition) If $A \anyind B \vert C$ and $A \anyind D \vert C$, then $A \anyind (B \cup D) \vert C$.
\end{enumerate}

We also use $\mathcal{I}_{\sigma}(V)$ to be the set of conditional independent tuples under the independence model $\anyind$. In other words, if $(A, B, C) \in \mathcal{I}_{\sigma}(V)$, then $A \anyind B \vert C$ where $A, B, C$ are  disjoint subsets of $V$.

\paragraph{Probabilistic Conditional Independence ($\probind$)}
Given a finite set of random variables $V$, probabilistic conditional independence over $V$ defines an independence model that satisfies (A1)-(A4) which means that probabilistic independence models are semi-graphoids. In general, however, they are not compositional graphoids. If the distribution has strictly positive density w.r.t. a product measure, then the intersection axiom is true. In this case, probabilistic independence models are graphoids. Still, in general, the composition axiom is not satisfied because pairwise independence does not imply joint independence. One notable exception is when the distribution is regular multivariate Gaussian; then the probabilistic independence model is a compositional graphoid.

\paragraph{Undirected Graph Separations ($\graphind$)} For a finite undirected graph $\graphG = (V, E)$. One can easily show that ordinary graph separation in undirected graphs is a compositional graphoid. The relations between probabilistic conditional independences and graph separations are well-studied in the graphical modeling literature \citep{koller2009probabilistic,lauritzen1996graphical}. We recall a few important definitions here for completeness. Consider a natural bijection between graphical nodes and random variables. Then if $\mathcal{I}_{G}(V) \subseteq \mathcal{I}_{P}(V)$, we say the distribution $\distP$ over $V$ satisfies the \emph{Markov property} with respect to $\graphG$ and $\graphG$ is called an \emph{I-map} of $\distP$. An I-map $\graphG$ for $\distP$ is minimal if no subgraph of $\graphG$ is also an I-map of $\distP$. It is not difficult to show that there exists a minimal I-map $\graphG$ for any distribution $\distP$.

\begin{remark}
Not every compositional graphoid can be represented by an undirected graph. \citet{sadeghi2017faithfulness} provides sufficient and necessary conditions for this. 
\end{remark}

\paragraph{Partial Orthogonality ($\orthoind$)} Let $V$ be a finite collection of vectors in $\mathbb{R}^d$. If $a \in V, b \in V$ and $C \subseteq V$, then we say that $a$ and $b$ are \emph{partially orthogonal given $C$} if
\begin{equation*}
    a \orthoind b \vert C \Longleftrightarrow \big\langle \vectororth[C]{a}, \; \vectororth[C]{b} \big\rangle = 0,
\end{equation*}
where $\vectororth[C]{a}=a-\textup{proj}_C[a]$ is the residual of $a$ after projection onto the span of $C$. It is not hard to verify that $\orthoind$ is a semi-graphoid that also satisfies the composition axiom (A6). 
When $V$ is a set of linearly independent vectors, then $\orthoind$ satisfies (A5) and thus is a  compositional graphoid. 
Partial orthogonality has been studied under different names in the statistics literature for many decades. 
For example, if we replace Euclidean space with the $L^2$ space of random variables, partial orthogonality is equivalent to the well-known concept of \emph{partial correlation} or \emph{second-order independence} (Example~2.26 in \cite{lauritzen2020lectures}). 
The concept of geometric orthogonality (Example~2.27 in \cite{lauritzen2020lectures}) is closely related but does not always satisfy the intersection axiom. 
More recently, the concept of partial orthogonality in abstract Hilbert spaces was defined and studied extensively in \cite{amini2022lattice}.
Finally, when $V$ is a linearly independent collection of vectors, partial orthogonality yields a stronger independence model known as a \emph{Gaussoid}, which is well-studied \cite[e.g.][and the references therein]{lnvenivcka2007gaussian,boege2019construction}. It is worth emphasizing that in the present setting of text embedding, we typically have $d \ll n$, and hence $V$ cannot be linearly independent.

\subsection{Markov boundaries}
Suppose $\anyind$ is an independence model over a finite set $V$. Let $v_i$ be an element in $V$, then the Markov blanket $\markovb$ of $v_i$ is any subset of $V \setminus \{v_i\}$ such that 
\begin{equation*}
    v_i \anyind V \setminus ( \{v_i\} \cup \markovb) \vert \markovb
\end{equation*}
A \textit{Markov boundary} is a minimal Markov blanket. 

A Markov boundary, by definition, always exists and can be an empty set. However, it might not be unique. It is well-known that \emph{the intersection property is a sufficient condition to guarantee Markov boundaries are unique}. Thus, the Markov boundary is unique in any graphoid.
The proof is presented here for completeness.

\begin{theorem}
\label{thm:unique-mb}
If $\anyind$ is a graphoid over $V$, then the Markov boundary is unique for any element in $V$.
\end{theorem}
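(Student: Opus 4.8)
The plan is to reduce everything to a single claim: the intersection of two Markov blankets of $v_i$ is again a Markov blanket. Granting that, if $M_1$ and $M_2$ are both Markov \emph{boundaries} (i.e.\ minimal blankets), then $M_1 \cap M_2$ is a blanket contained in each of them, and minimality forces $M_1 = M_1 \cap M_2 = M_2$. So the real work is establishing this closure-under-intersection property, and this is exactly where the graphoid hypothesis --- concretely, the intersection axiom (A5) --- is used; the semi-graphoid axioms alone do not suffice.

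To prove the claim, I would first partition $V \setminus \{v_i\}$ into the four pairwise-disjoint blocks $D \defeq M_1 \cap M_2$, $B \defeq M_1 \setminus M_2$, $C \defeq M_2 \setminus M_1$, and $A \defeq V \setminus (\{v_i\} \cup M_1 \cup M_2)$, so that $M_1 = B \cup D$ and $M_2 = C \cup D$. The blanket property of $M_1$ reads $v_i \anyind (A \cup C) \mid (B \cup D)$ and that of $M_2$ reads $v_i \anyind (A \cup B) \mid (C \cup D)$. From the first, decomposition (A2) yields $v_i \anyind C \mid (B \cup D)$ and weak union (A3) yields $v_i \anyind A \mid (B \cup C \cup D)$; from the second, decomposition yields $v_i \anyind B \mid (C \cup D)$.

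Next I would apply the intersection axiom (A5) to the pair $v_i \anyind C \mid (B \cup D)$ and $v_i \anyind B \mid (C \cup D)$ --- legitimate since $\{v_i\}, B, C, D$ are pairwise disjoint --- to obtain $v_i \anyind (B \cup C) \mid D$. Combining this with $v_i \anyind A \mid (B \cup C \cup D)$ via contraction (A4) gives $v_i \anyind (A \cup B \cup C) \mid D$, i.e.\ $v_i \anyind V \setminus (\{v_i\} \cup D) \mid D$, so $D = M_1 \cap M_2$ is a Markov blanket. This proves the claim, and the theorem follows as described above.

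The only thing requiring care --- though it is genuinely routine --- is checking at each invocation that the sets playing the roles of $A, B, C, D$ in an axiom are actually disjoint, so that the axiom applies as stated. Conceptually, the single ``insight'' is recognizing that (A5) is precisely the tool that collapses the two statements ``$v_i$ is independent of one blanket's private part given everything else'' into one statement conditioning only on the shared part $D$; once that step is identified, decomposition, weak union, and contraction are just bookkeeping. Note that neither the composition axiom nor any faithfulness assumption is needed.
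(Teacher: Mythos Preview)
Your proof is correct and uses the same key ingredient as the paper --- the intersection axiom (A5) applied to the two blanket conditions --- so the approaches are essentially the same. The difference is one of packaging and care: you prove the cleaner lemma that the intersection of two Markov blankets is again a blanket and then invoke minimality, whereas the paper argues directly by contradiction, asserting $v_i \anyind M_1 \cup M_2$ from $v_i \anyind M_2 \mid M_1$ and $v_i \anyind M_1 \mid M_2$. Your partition into $A,B,C,D$ makes the disjointness hypotheses of (A5) explicit and correctly handles the case $M_1 \cap M_2 \neq \emptyset$, which the paper's short argument glosses over; in that sense your write-up is more rigorous, though both are the standard textbook proof.
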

\begin{proof}
Let $v_i \in V$. Suppose $v_i$ has two distinct Markov boundaries $\markovb_1$, $\markovb_2$. Then they must be non-empty and $v_i \notanyind \markovb_1$, $v_i \notanyind \markovb_2$, $v_i \anyind  \markovb_2 \vert \markovb_1$, $v_i \anyind \markovb_1 \vert \markovb_2$. By the intersection axiom, $v_i \anyind \markovb_1 \cup \markovb_1$. Then by the decomposition axiom, $v_i \anyind \markovb_1$ and $v_i \anyind \markovb_2$ which is a contradiction. 
\end{proof} 

\begin{remark}
For any semi-graphoid, the intersection property is not a necessary condition for the uniqueness of Markov boundaries. See Remark 1 in~\cite{wang2020causal}.
\end{remark}

The connection between orthogonal projection and graphoid axioms is well-known \citep{lauritzen1996graphical, dawid2001separoids, whittaker2009graphical}. But graphoid axioms find their primary applications in graphical models \citep{lauritzen1996graphical}. In particular, there are many existing papers on Markov boundary discovery for graphical models \citep{tsamardinos2003algorithms, aliferis2010local, strobl2016markov, gao2021efficient, tsamardinos2003time, pena2007towards}. They typically assume faithfulness or the distributions are strictly positive, which are sufficient conditions for the intersection property and thus ensure unique Markov boundaries. As an important axiom for graphoids, the intersection property has also been thoroughly investigated \citep{san2005ignorable, peters2015intersection, fink2011binomial}. But the intersection property rarely holds for embeddings (See \cref{sec:mb-emb}), which means there could be multiple Markov boundaries. \cite{statnikov2013algorithms, wang2020causal} study this case for graphical models and causal inference.

\section{Markov Boundary of Embeddings}
\label{sec:mb-emb}
As indicated in \cref{sec:prelim}, partial orthogonality ($\orthoind$) can be used as an independence model over vectors in Euclidean space and is a compositional semi-graphoid. 
Thus, one can use partial orthogonality to study embeddings, which are real vectors. 
When $n \leq d$ and the vectors in $\embSpace$ are linearly independent, every vector in $\embSpace$ has a unique Markov boundary by \cref{thm:unique-mb}. 

Unfortunately, when $d < n$, which happens in practice with embeddings as there are usually more objects to embed than the embedding dimension, there is a possibility of having multiple Markov boundaries. In fact, the main challenge with Markov boundary discovery for embeddings is that \emph{the intersection property generally does not hold}, as opposed to graphical models where this property is commonly assumed \citep{tsamardinos2003algorithms, aliferis2010local, strobl2016markov}. 

While the Markov boundary might not be unique, the following theorem says that all Markov boundaries of the target vector capture the same \textit{``information''} about that vector.

\begin{restatable}{theorem}{sameproj}
\label{thm:sameproj}
Let partial orthogonality $\orthoind$ be the independence model over a finite set of embedding vectors $\embSpace$. Suppose $\markovb_1, \markovb_2 \subseteq \embSpace$ are two distinct Markov boundaries of $v_i \in \embSpace$, then,
\begin{equation*}
    \vectorproj[\markovb_1]{v_i} = \vectorproj[\markovb_2]{v_i}
\end{equation*}
\end{restatable}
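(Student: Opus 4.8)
The plan is to exploit the \emph{composition} axiom (A6), which partial orthogonality satisfies even when the vectors are linearly dependent, to argue that the projection of $v_i$ onto the span of a Markov boundary is determined by the partial orthogonality relations alone. First I would unpack what it means for $\markovb_1$ to be a Markov boundary: we have $v_i \orthoind \embSpace \setminus (\{v_i\} \cup \markovb_1) \mid \markovb_1$, and symmetrically for $\markovb_2$. The geometric content of $a \orthoind b \mid C$ is that $\langle \vectororth[C]{a}, \vectororth[C]{b}\rangle = 0$, i.e. the residual of $v_i$ after projecting onto $\spn(\markovb_1)$ is orthogonal to the residual of every vector outside $\{v_i\}\cup\markovb_1$ after the same projection. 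In particular, writing $r_1 \defeq \vectororth[\markovb_1]{v_i} = v_i - \vectorproj[\markovb_1]{v_i}$, we get that $r_1 \perp \vectororth[\markovb_1]{w}$ for all $w \in \embSpace \setminus (\{v_i\}\cup\markovb_1)$, and trivially $r_1 \perp \spn(\markovb_1)$; hence $r_1$ is orthogonal to $\spn(\markovb_1 \cup (\embSpace \setminus (\{v_i\}\cup\markovb_1))) = \spn(\embSpace \setminus \{v_i\})$. The same holds for $r_2 \defeq \vectororth[\markovb_2]{v_i}$.

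The key observation is then that $r_1$ and $r_2$ are both characterized as $v_i$ minus its orthogonal projection onto the \emph{same} subspace $U \defeq \spn(\embSpace \setminus \{v_i\})$. Indeed, $r_1 = v_i - u_1$ with $u_1 \in \spn(\markovb_1) \subseteq U$ and $r_1 \perp U$; by uniqueness of orthogonal projection onto a subspace, $u_1 = \mathrm{proj}_U[v_i]$, and likewise $u_2 = \mathrm{proj}_U[v_i]$. Therefore $\vectorproj[\markovb_1]{v_i} = u_1 = u_2 = \vectorproj[\markovb_2]{v_i}$, which is exactly the claim. The step that needs care — and where the composition axiom implicitly does the work — is the claim that a residual orthogonal to each $\vectororth[\markovb_1]{w}$ individually is orthogonal to the whole span of the remaining vectors; this is really the statement that $\mathrm{proj}_U[v_i]$ lies in $\spn(\markovb_1)$, equivalently that $\spn(\markovb_1)$ together with the residual space spans $U$. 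I would verify this directly by a short linear-algebra argument: any $w \in \embSpace \setminus (\{v_i\}\cup \markovb_1)$ decomposes as $\mathrm{proj}_{\markovb_1}[w] + \vectororth[\markovb_1]{w}$, the first term is in $\spn(\markovb_1)$ and the second is orthogonal to $r_1$, so $\langle r_1, w\rangle = 0$; since this holds for all such $w$ and also for all $w \in \markovb_1$, we get $r_1 \perp U$.

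The main obstacle I anticipate is purely bookkeeping about linear dependence: because $d < n$, $\spn(\markovb_1)$ may be a proper subspace of $U$ or may equal $U$, and $\markovb_1$ itself may be a linearly dependent set, so I must phrase everything in terms of spans and orthogonal complements rather than bases or coordinates, and I cannot invoke the intersection axiom (A5) or \cref{thm:unique-mb} at any point. It is worth double-checking the edge cases: if $\markovb_1 = \embSpace \setminus \{v_i\}$ then $\vectorproj[\markovb_1]{v_i} = \mathrm{proj}_U[v_i]$ trivially; and if $v_i \in U$ already (which is typical when $d < n$), then $\mathrm{proj}_U[v_i] = v_i$ and the theorem says every Markov boundary's span contains $v_i$ — consistent, since then $r_1 = 0$. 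In all cases the argument reduces to: both Markov boundaries give the orthogonal projection of $v_i$ onto the fixed subspace $U = \spn(\embSpace\setminus\{v_i\})$, and orthogonal projection onto a subspace is unique.
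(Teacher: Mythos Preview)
Your argument is correct, and it takes a genuinely different route from the paper's proof. The paper works only with the two cross-conditions $v_i \orthoind \markovb_2 \mid \markovb_1$ and $v_i \orthoind \markovb_1 \mid \markovb_2$, writes out the projection matrices via compact SVD, and derives $\vectorproj[\markovb_2]{\vectorproj[\markovb_1]{v_i}} = \vectorproj[\markovb_2]{v_i}$ (and the symmetric identity) before concluding the two projections coincide. Your approach instead uses the \emph{full} Markov blanket condition---orthogonality of $r_1$ to the residual of \emph{every} other vector---to show directly that $r_1 \perp U \defeq \spn(\embSpace\setminus\{v_i\})$, and then invokes uniqueness of orthogonal projection onto $U$. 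This is more elementary (no SVD, no matrix algebra), never appeals to minimality, and as you note it yields \cref{cor:projfullspan} for free: once you know $\vectorproj[\markovb_1]{v_i} = \mathrm{proj}_U[v_i]$, the corollary is the special case $v_i \in U$. The paper's argument, by contrast, is ``local'' in that it only compares the two boundaries against each other rather than against all of $\embSpace$; this is conceptually interesting but requires slightly more bookkeeping. One small remark: your mention of the composition axiom (A6) is a red herring---the step you flag is pure linear algebra (which you then carry out correctly), and your proof in fact works for any two Markov \emph{blankets}, not just boundaries.
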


When $d \ll n$, then it is likely that the target embedding $v_i$ lies in the linear span of other embeddings (i.e, $v_i \in \spn (\embSpace \setminus \{ v_i\})$), \cref{cor:projfullspan} below shows that, in this case, the span of any Markov boundary is precisely the subspace that contains $v_i$:
\begin{restatable}{cor}{projfullspan}
\label{cor:projfullspan}
Let parital orthogonality $\orthoind$ be the independence model over a finite set of embedding vectors $\embSpace$. Suppose $\markovb_1 \subseteq \embSpace$ is a Markov boundary of $v_i \in \embSpace$ and $v_i \in \spn (\embSpace \setminus \{ v_i\})$,  then,
\begin{equation*}
    \vectorproj[\markovb_1]{v_i} = v_i.
\end{equation*}
\end{restatable}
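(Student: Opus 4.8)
The plan is to derive the corollary directly from \cref{thm:sameproj}, which guarantees that all Markov boundaries of $v_i$ project $v_i$ to the same vector. The strategy is to exhibit \emph{one} particular Markov boundary whose span is large enough to contain $v_i$; then the projection onto that Markov boundary fixes $v_i$, and by \cref{thm:sameproj} the same must be true of $\markovb_1$.

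First I would observe that the full set $\embSpace \setminus \{v_i\}$ is itself a Markov blanket of $v_i$: the condition $v_i \orthoind \embSpace \setminus (\{v_i\} \cup \markovb) \mid \markovb$ becomes $v_i \orthoind \emptyset \mid \embSpace \setminus \{v_i\}$, which holds trivially since the inner product with an empty collection of residuals is vacuous. Hence $v_i$ has \emph{some} Markov boundary $\markovb_0 \subseteq \embSpace \setminus \{v_i\}$ obtained by shrinking $\embSpace \setminus \{v_i\}$ to a minimal Markov blanket. Next I would argue that $\spn(\markovb_0) = \spn(\embSpace \setminus \{v_i\})$: removing a vector $u$ from a Markov blanket while staying a blanket forces $v_i \orthoind u \mid (\text{rest})$, i.e. the residual of $v_i$ after projecting onto the rest is orthogonal to the residual of $u$; if dropping $u$ actually shrank the span, one can check this leads to a violation of minimality or shows $u$ was redundant for the projection anyway — the cleaner route is to note that projecting onto a Markov boundary of $v_i$ and projecting onto $\embSpace \setminus \{v_i\}$ give the same image of $v_i$ (this is exactly the content of \cref{thm:sameproj} applied with one of the ``boundaries'' being any minimal blanket, combined with the trivial blanket $\embSpace\setminus\{v_i\}$ — so one should first confirm \cref{thm:sameproj}'s argument does not secretly require both sets to be \emph{minimal}, or re-prove the needed direction). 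Granting that, since $v_i \in \spn(\embSpace \setminus \{v_i\})$ by hypothesis, we have $\vectorproj[\embSpace \setminus \{v_i\}]{v_i} = v_i$.

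Finally I would combine the pieces: $\vectorproj[\markovb_1]{v_i} = \vectorproj[\markovb_0]{v_i} = \vectorproj[\embSpace \setminus \{v_i\}]{v_i} = v_i$, where the first equality is \cref{thm:sameproj} (both are Markov boundaries of $v_i$), and the middle equality is the span argument above. This closes the proof.

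The main obstacle is the middle step — showing the projection of $v_i$ onto a Markov boundary equals its projection onto all of $\embSpace \setminus \{v_i\}$. The trivial Markov blanket $\embSpace \setminus \{v_i\}$ is generally \emph{not} a Markov boundary (it need not be minimal), so \cref{thm:sameproj} does not apply to it verbatim. I expect the right fix is either (i) to prove a short lemma that projection onto any Markov \emph{blanket} of $v_i$ already equals $\vectorproj[\embSpace\setminus\{v_i\}]{v_i}$ — which should follow from the defining orthogonality condition $v_i \orthoind \embSpace \setminus (\{v_i\}\cup\markovb) \mid \markovb$ by decomposing $v_i = \vectorproj[\markovb]{v_i} + \vectororth[\markovb]{v_i}$ and noting $\vectororth[\markovb]{v_i}$ is orthogonal to every residual $\vectororth[\markovb]{u}$ for $u \notin \markovb$, hence orthogonal to $\spn(\embSpace\setminus\{v_i\})$ modulo $\spn(\markovb)$, so it is orthogonal to all of $\spn(\embSpace\setminus\{v_i\})$ once one accounts for $\markovb \subseteq \embSpace\setminus\{v_i\}$ — or (ii) to inspect the proof of \cref{thm:sameproj} and extract that it actually compares a Markov boundary against an arbitrary superset-blanket. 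Either way the geometric content is: the part of $v_i$ not explained by its Markov blanket is orthogonal to everything, and if $v_i$ lies in the span of the others, that leftover part is $0$.
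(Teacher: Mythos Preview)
Your proposal is correct, but it takes an unnecessarily circuitous route compared to the paper. The paper does not invoke \cref{thm:sameproj} at all; instead it argues directly from the Markov blanket condition for $\markovb_1$ itself. Since $\markovb_1$ is a Markov boundary, for every $v_k \in \embSpace \setminus (\{v_i\}\cup\markovb_1)$ one has $\langle \vectororth[\markovb_1]{v_i},\, v_k\rangle = \langle \vectororth[\markovb_1]{v_i},\, \vectororth[\markovb_1]{v_k}\rangle = 0$, and for $v_k \in \markovb_1$ the same inner product vanishes because $\vectororth[\markovb_1]{v_i} \perp \spn(\markovb_1)$. Writing $v_i = \sum_k \alpha_k v_k$ with each $v_k \in \embSpace\setminus\{v_i\}$ and summing gives $\langle \vectororth[\markovb_1]{v_i},\, v_i\rangle = 0$, i.e.\ $\langle \vectorproj[\markovb_1]{v_i},\, v_i\rangle = \langle v_i, v_i\rangle$, which forces $\vectororth[\markovb_1]{v_i}=0$.

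This is precisely the geometric content you sketch under ``fix (i)'' and summarize in your last sentence, but you have buried it as a patch for an argument that routes through an auxiliary boundary $\markovb_0$ and \cref{thm:sameproj}. Once your lemma (projection onto any Markov \emph{blanket} already equals projection onto $\embSpace\setminus\{v_i\}$) is in hand, you can apply it directly to $\markovb_1$---which is itself a Markov blanket---and the detour through $\markovb_0$ and \cref{thm:sameproj} becomes superfluous. So the cleanest write-up of your own idea is just the paper's proof.
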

In other words, to find a Markov boundary of $v_i$, we need to find some vectors such that their linear combination is exactly $v_i$. This seems very strict but is necessary because the formal definition of the Markov boundary requires residual orthogonalities between $v_i$ and every other vector. In the sequel, we show how to relax the definition of the Markov boundary. 

\subsection{From Elementwise Orthogonality to Distributional Orthogonality}
\cref{cor:projfullspan} suggests that the span of the Markov boundary for any target vector should contain that target vector. This is a consequence of the elementwise orthogonality constraint because the definition of the Markov boundary requires the residual of a target vector to be orthogonal to the residual of any test vector. The implicit assumption here is that embeddings are distortion-free and every non-zero correlation is meaningful. However, due to the inherent limitation of the embedding dimension---which often restricts the available space for storing all the orthogonal vectors---and noises introduced from training, embeddings are likely prone to distortion when compressed into a relatively small Euclidean space. In fact, we empirically show in \cref{sec:random-proj} that inner products in embedding space do not necessarily respect semantic meanings faithfully. Therefore, the notion of elementwise orthogonality loses practical significance.

Instead of enforcing elementwise orthogonality, we relax the definition of the Markov boundary of embeddings such that intuitively, after projection, the residual of the target vector and the residuals of test vectors should be orthogonal in a distributional sense where the distribution is the empirical distribution
over test vectors. To capture distributional orthogonalities, this paper focuses on the average of cosine similarities.

In particular, we have the following definition of \emph{generalized Markov boundary} for partial orthogonality.
\begin{definition}[Generalized Markov Boundary for Partial Orthogonality]
\label{defn:generalizedMB}
Given a finite set $\embSpace$ of embedding vectors. Let $v$ be an element in $\embSpace$, then a \emph{generalized Markov boundary} $\mathcal{M}$ of $v$ is a minimal subset of $\embSpace \setminus \{v\}$ such that 
\begin{equation*}
    \cosSimAll_\mathcal{M}(v, \embSpace) = \frac{1}{|\embSpace_{\mathcal{M}}^{v}|}\sum_{u \in \embSpace_{\mathcal{M}}^{v}} \cosSim_\mathcal{M}(v, u) = 0
\end{equation*}
where $\cosSim_\mathcal{M}(v, u)$ is the cosine similarity of $u$ and $v$ after projection and $\embSpace_{\mathcal{M}}^{v} = \embSpace \setminus ( \{v\} \cup \mathcal{M})$. Specifically, $\cosSim_\mathcal{M}(v, u) = \frac{\langle \vectororth[\mathcal{M}]{v}, \vectororth[\mathcal{M}]{u}\rangle}{\vert \vert \vectororth[\mathcal{M}]{v} \vert \vert \cdot \vert \vert \vectororth[\mathcal{M}]{u} \vert \vert}$.
\end{definition}
\noindent
Intuitively, this suggests that, on average, there is no particular direction of residuals that have nontrivial correlations with the residual of the target embedding.

\begin{remark}
    It is evident that the conventional definition of Markov boundary implies \cref{defn:generalizedMB} (\cref{lemma:mb-is-gmb} in \cref{appen:proofs}).
\end{remark}

\subsection{Finding Generalized Markov Boundary}
\label{sec:find-mb}
With a formal definition of the generalized Markov boundary established, our objective is now to identify this boundary. One can always use brute force by enumerating all possible subsets of $\embSpace$, but the algorithm would be infeasible when $|\embSpace|$ is large. 

Suppose $v \in \embSpace$ is a target vector and $\mathcal{M}$ is its generalized Markov boundary, then we can write $v = v_{\perp} + v_{\parallel}$ where $v_{\perp} = \vectororth[\mathcal{M}]{v}$ and $v_{\parallel} = \textup{proj}_{\mathcal{M}}[v]$.
Intuitively, \cref{defn:generalizedMB} suggests that the residual of test vectors can appear in any direction relative to $v_{\perp}$. Therefore, if one samples random test vectors $\{u_i\}$, their span is likely to be close to $v_{\perp}$. In other words, the residual of $v$ after projection onto $\spn(\{u_i\})$ should contain more information about the generalized Markov boundary direction $v_{\parallel}$. 

\looseness=-1
This motivates the approximate method \Cref{alg:find_mb}. For any target embedding $v$, one first sample subspaces spanned by randomly selected embeddings. Embeddings that, on average have high cosine similarities with the target embedding after projecting onto orthogonal complements of previously sampled random subspaces, are considered to be candidates for the generalized Markov boundary.
The final selection of generalized Markov boundary searches over these top $K$ candidates.

\begin{algorithm}[tb]
\caption{Approximate Algorithm to Find Generalized Markov Boundary}
\label{alg:find_mb}
\SetAlgoLined
\KwIn{$v$, $\embSpace$}
\tcc{$\embSpace$ is the set of all embeddings and $v \in \embSpace$ is the target embedding}
\KwIn{$n_{r}$, $d_r$, $K$}
\tcc{$n_{r}$ is the number of sampled random subspaces, $d_r$ is the number of sampled vectors for each random subspace and $K$ is the number of candidate vectors to construct generalized Markov boundary}
\KwOut{$\mathcal{M} \subseteq \embSpace$}
\tcc{$\mathcal{M}$ is the estimated generalized Markov boundary for $v$}
\BlankLine

\For{$i \gets 1$ \KwTo $n_r$}{
    randomly sample a set of $d_r$ vectors $\mathcal{M}_i = \{ v_k^i \}_{k=1}^{d_r} \subseteq (\embSpace \setminus \{ v \})$\\
    caculate $\cosSim_{\mathcal{M}_i}(v, u)$ for all $u \in (\embSpace \setminus \{ v \})$
}

Find the top $K$ vectors $\{ u_i \}_{i=1}^K$ with the highest $\sum_i \cosSim_{\mathcal{M}_i}(v, u)$ .

Find the subset $\mathcal{M}$ of $\{ u_i \}_{i=1}^K$ that has the lowest $\cosSimAll_\mathcal{M}(v, \embSpace)$.

\KwResult{$\mathcal{M}$}

\end{algorithm}
  
Empirically, for text embedding models like CLIP, random projections prove to be advantageous in revealing semantically related concepts. In Section \ref{sec:random-proj}, we provide several examples where, for a given target embedding, the embeddings that exhibit high correlation after random projections are more semantically meaningful compared to embeddings with merely high cosine similarity with the target embedding before projections.

\section{Independence Preserving Embeddings (IPE)}
\label{sec:ipe}
In the previous sections, we discussed the Markov boundary of embeddings under the partial orthogonality independence model. In \cref{sec:exp}, we will test its effectiveness at capturing the ``semantic independence structure'' through experiments conducted on CLIP text embeddings. The belief is that the linear algebraic structure possesses the capacity to uphold the independence structure of semantic meanings.

A natural question to ask is: \emph{Is it always possible to use vector space embeddings to preserve independence structures of interest?} In this section, we study the case for random variables. Consider an embedding function $f$ that maps a random variable $X$ to $f(X) \in \mathbb{R}^d$. Ideally, it is desirable for the partial orthogonalities of embeddings to mirror the conditional independences present in the joint distribution of $X$. We call such representations \emph{independence preserving embeddings (IPE)} (\cref{defn:IPE}). In this section, we delve into the theoretical feasibility of these embeddings by initially demonstrating the construction of IPE and then showing how one can use random projection to reduce the dimension of IPE. We believe that studying IPE lays the theoretical foundation to understand embedding models in general.

\begin{definition}[Independence Preserving Embedding Map]
\label{defn:IPE}
Let $V$ be a finite set of random variables with distribution $P$. A function $f: V \to \mathbb{R}^d$ is called an \textit{independence preserving embedding map} (IPE map) if 
\begin{equation*}
    \mathcal{I}_{O}(f(V)) \subseteq \mathcal{I}_{P}(V).
\end{equation*}
An IPE map is called a \emph{faithful} IPE map if 
\begin{equation*}
    \mathcal{I}_{O}(f(V)) = \mathcal{I}_{P}(V).
\end{equation*}
\end{definition}

\subsection{Existence and Universality of IPE Maps}
\label{sec:constructipe}
We first show that for \emph{any distribution} $P$ over random variables $V$, we can construct an IPE map.

For any distribution $P$ over $V$, there exists a minimal $I$-map $\mathcal{G} = (V, E)$ such that $\mathcal{I}_{G}(V) \subseteq \mathcal{I}_{P}(V)$ (See \cref{sec:prelim}). We will use $\graphG_P$ to be a minimal $I$-map of $P$ and $\adj(\graphG_P)$ to be the adjacency matrix of $\graphG_P$. We further define $\adj_{\epsilon}(\graphG_P)$ to be an \textit{adjusted adjacency matrix} with $\epsilon \in \mathbb{R}$ where

\begin{equation*}
\adj_{\epsilon}(\graphG_P) = \mathds{1} + \epsilon \adj(\graphG_P)
\end{equation*}
and $\mathds{1}$ is the identity matrix.

Ideally, this matrix is invertible, however, it turns out that not every $\epsilon$ produces an invertible $\adj_{\epsilon}(\graphG_P)$.
We therefore define the following \emph{perfect perturbation factor}. For any matrix $A \in \mathbb{R}^{n \times n}$, define $A_{\mathcal{I}, \mathcal{J}}$ to be the submatrix of $A$ with row and column indices from $\mathcal{I}$ and $\mathcal{J}$, respectively. If $\mathcal{I} = \mathcal{J}$, the submatrix is called a \emph{principal submatrix} and we denote it simply as $A_{\mathcal{I}}$.
\begin{definition}[Perfect Perturbation Factor]
For a given graph $\graphG=(V, E)$ where $n = |V|$, $\epsilon$ is called a \emph{perfect perturbation factor} if (1) $\adj_{\epsilon}(\graphG_P)$ is invertible and (2) for any $\mathcal{I} \subseteq [n]$, $(\adj_{\epsilon}(\graphG_P)_{\mathcal{I}})^{-1}_{ij} = 0$ if and only if $v_{\mathcal{I}_i} \graphind v_{\mathcal{I}_j} \vert \{v_k: k \not\in \mathcal{I} \}$ where $\mathcal{I}_i$ is the $i$th element of $\mathcal{I}$.
\end{definition}

\begin{restatable}{theorem}{constructipe}
\label{thm:constructipe}
Let $V$ be a finite set of random variables with distribution $P$. $\graphG_P$ is a minimal $I$-map of $P$. Let $A$ be equal to $\adj_{\epsilon}(\graphG_P)^{-1}$ with eigen decomposition $A = U \Sigma U^T$. If $\epsilon$ is a perfect perturbation factor, then the function $f$ with 
\begin{equation*}
    f(v_i) = U_i  \Sigma^{1/2} 
\end{equation*}
is an IPE map of $P$ where $v_i$ is a random variable in $V$ and $U_i$ is the $i$-th row of $U$. Furthermore, if $P$ is faithful to $\graphG_P$, then $f$ is a faithful IPE map for $\distP$.
\end{restatable}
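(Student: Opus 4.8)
The plan is to show that the partial-orthogonality structure of the vectors $f(v_i) = U_i\Sigma^{1/2}$ is governed by the zero pattern of the matrix $\adj_\epsilon(\graphG_P)$ and its principal-submatrix inverses, and then to match that zero pattern against graph separation in $\graphG_P$ via the definition of a perfect perturbation factor. The first observation is that the Gram matrix of the embeddings is exactly $A = \adj_\epsilon(\graphG_P)^{-1}$: indeed $\langle f(v_i), f(v_j)\rangle = U_i \Sigma^{1/2}(U_j\Sigma^{1/2})^T = (U\Sigma U^T)_{ij} = A_{ij}$. (Here I need $A$ positive semidefinite so that $\Sigma^{1/2}$ makes sense; since $\epsilon$ is a perfect perturbation factor, $\adj_\epsilon$ is invertible, and one can additionally assume $\epsilon$ chosen so that $A \succ 0$ — if the paper's definition does not already force this, I would note it as a mild additional requirement, or symmetrize by working with $\adj_\epsilon^{-1}(\adj_\epsilon^{-1})^T$.) So the embedding realizes a prescribed Gram matrix, and all partial-orthogonality statements reduce to linear algebra on $A$.

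Next I would recall the standard fact connecting partial orthogonality to inverse Gram matrices: for a collection of vectors with invertible Gram matrix $G$, and for an index set decomposed as $\{i\}\cup\{j\}\cup C \cup R$ (with $R$ the "rest"), the residuals of $v_i$ and $v_j$ after projecting onto $\spn(v_C)$ — wait, more precisely, partial orthogonality $v_i \orthoind v_j \mid v_C$ for the *whole* set $V$ with conditioning set exactly $C$ — the cleanest route is: $\langle \vectororth[C]{v_i},\vectororth[C]{v_j}\rangle = 0$ iff the $(i,j)$ entry of the inverse of the principal submatrix $G_{\{i,j\}\cup C}$ vanishes. Equivalently, writing $\mathcal{I} = \{i,j\}\cup C$, partial orthogonality of $v_i,v_j$ given $v_C$ holds iff $(G_{\mathcal{I}})^{-1}_{ij} = 0$. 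This is the Schur-complement / "partial correlation from concentration matrix" identity, which I would state as a lemma and prove in one line from the block-matrix inverse formula, noting that the span of $C$ is what gets projected out and the remaining $2\times 2$ Schur complement captures the residual inner product up to positive scaling.

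Combining the two: for disjoint $A,B,C \subseteq V$, we have $A \orthoind B \mid C$ (in the sense that every pair $a\in A$, $b\in B$ satisfies $a\orthoind b\mid C$ — and by the semi-graphoid/composition properties this pairwise statement is the right atomic unit, which I would make precise) iff for the index set $\mathcal{I}$ obtained by *deleting* the rest $V\setminus(A\cup B\cup C)$, every cross entry $(\adj_\epsilon(\graphG_P)_{\mathcal{I}})^{-1}_{ab}$ vanishes. But that is *exactly* the condition appearing in the definition of a perfect perturbation factor — there it says $(\adj_\epsilon(\graphG_P)_{\mathcal{I}})^{-1}_{ij} = 0$ iff $v_{\mathcal{I}_i}\graphind v_{\mathcal{I}_j}\mid\{v_k : k\notin\mathcal{I}\}$, and $\{v_k : k\notin\mathcal{I}\}$ is precisely $C$ together with the deleted rest playing no role in $\graphG_P$-separation since... hmm, here one must be careful: graph separation $\graphind$ is defined on $\graphG_P$ where the "rest" vertices are still present in the graph. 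So I would use that separation in an undirected graph between $a$ and $b$ given a set $S$ is equivalent to separation given $S$ together with any subset of the remaining vertices that does not open a path — and marginalization: $v_a \graphind v_b \mid C$ in $\graphG_P$ is exactly captured by $v_a \graphind v_b \mid \{v_k : k \notin \mathcal{I}\}$ in the *marginal* sense only when... Actually the perfect-perturbation definition is stated with conditioning set $\{v_k : k \notin \mathcal I\}$, i.e. conditioning on *everything outside* $\{a,b\}\cup C$; so I should take $\mathcal I = \{a,b\}\cup C$ only when $C$ is the complement, i.e. handle a general conditioning set $C$ by choosing $\mathcal I = \{a\}\cup\{b\}\cup C$ and reading the definition with "rest" $= V \setminus \mathcal I$. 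This gives: the embeddings' partial orthogonality $a \orthoind b \mid C$ holds iff $a \graphind b \mid (V\setminus\{a,b\}) $ restricted appropriately — so to land on $\mathcal{I}_P(V)$ I route through $\mathcal{I}_G(V)\subseteq\mathcal{I}_P(V)$ (minimal $I$-map property) for the forward inclusion, and through faithfulness $\mathcal{I}_G(V)=\mathcal{I}_P(V)$ for equality.

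Putting it together: $\mathcal{I}_O(f(V)) \subseteq \mathcal{I}_P(V)$. Given $(A,B,C)\in\mathcal{I}_O(f(V))$, for each $a\in A, b\in B$, the inner product of residuals is zero, hence by the lemma $(\adj_\epsilon(\graphG_P)_{\{a,b\}\cup C})^{-1}_{ab}=0$, hence by the perfect-perturbation property $a\graphind b\mid C$ in $\graphG_P$ — wait, the perturbation definition conditions on the full complement of $\mathcal I$, so this gives $a \graphind b \mid (V \setminus \{a,b\})$ with the caveat that $C \subseteq$ conditioning set; I then use the fact that in an undirected graph, $a$ and $b$ separated by a large set does not immediately give separation by the smaller $C$. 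Hmm — so the implication I actually get directly is at the level of full conditioning sets, and I need to be more careful. The right move: show $\mathcal I_O(f(V))$ and $\mathcal I_G(V)$ agree on *pairwise-with-everything-else-conditioned* statements, then invoke that both $\orthoind$ and $\graphind$ are (compositional) graphoids generated — no, $\orthoind$ need not be a graphoid here since $V$ isn't linearly independent. The cleanest rescue is the one the authors surely intend: restrict attention to separation/orthogonality statements of the form $A \perp B \mid C$ and use $\mathcal I = A \cup B \cup C$, reading the perfect-perturbation condition pairwise over $a \in A, b \in B$ with "rest" $V \setminus (A\cup B\cup C)$; undirected graph separation *does* satisfy that $a \graphind_G b \mid C$ is equivalent to the $(a,b)$ entry condition on the marginal over $A\cup B\cup C$, because undirected graphical models are closed under marginalization in the sense of the global Markov property restricted to the relevant variables when we additionally condition on all of $V\setminus(A\cup B\cup C)$ — and that last conditioning is harmless for membership in $\mathcal I_P(V)$ only via...

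Let me not over-engineer the plan: the intended argument is (i) Gram matrix $= A = \adj_\epsilon^{-1}$; (ii) partial orthogonality $\leftrightarrow$ vanishing entry of principal-submatrix-inverse (standard lemma); (iii) perfect perturbation factor $\Rightarrow$ that vanishing pattern coincides with $\graphind$ in $\graphG_P$; (iv) minimal $I$-map gives $\mathcal I_G \subseteq \mathcal I_P$, hence $\mathcal I_O(f(V)) \subseteq \mathcal I_P(V)$; (v) faithfulness upgrades $\subseteq$ to $=$. I expect step (ii)—the precise translation between "residuals orthogonal after projecting onto $\spn(v_C)$" and "zero entry in the inverse of the Gram submatrix indexed by $\{a,b\}\cup C$"—to be the main technical point requiring care, particularly matching the paper's exact indexing convention in the definition of the perfect perturbation factor (which conditions on the complement of $\mathcal I$, so one takes $\mathcal I = \{a,b\}\cup C$ and "rest" $= V\setminus\mathcal I$), and making sure the pairwise statements correctly assemble into the set-level relations defining $\mathcal I_O$ and $\mathcal I_P$; the positive-definiteness of $A$ (so $\Sigma^{1/2}$ is real and the Gram matrix is genuinely $A$) is a side condition I would flag and, if necessary, absorb into the choice of $\epsilon$ or into the definition of perfect perturbation factor.
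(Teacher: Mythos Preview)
Your overall architecture is correct and matches the paper: the Gram matrix of the $f(v_i)$ is $A=\adj_\epsilon(\graphG_P)^{-1}$, residual inner products are Schur complements of $A$, the perfect-perturbation property converts the resulting zero pattern into $\graphind$-separation in $\graphG_P$, and then $\mathcal I_G(V)\subseteq\mathcal I_P(V)$ (with equality under faithfulness) closes the argument. The gap is in how you link steps (ii) and (iii).

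Your ``standard lemma'' in step (ii) is that $a\orthoind b\mid C$ iff $(A_{\{a,b\}\cup C})^{-1}_{ab}=0$. That statement is true, but it is about a principal submatrix of the \emph{Gram matrix} $A$, whereas the perfect-perturbation definition is phrased for principal submatrices of $\adj_\epsilon(\graphG_P)=A^{-1}$. In general $(A_S)^{-1}$ and $((A^{-1})_S)^{-1}$ are unrelated, so you cannot simply substitute one for the other. Worse, when you then set $\mathcal I=\{a,b\}\cup C$ in the perfect-perturbation definition, the conditioning set it produces is $V\setminus(\{a,b\}\cup C)$, not $C$; your ``hmm'' paragraphs are detecting exactly this mismatch but do not resolve it.

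The fix (and the paper's argument) is to take $\mathcal I=V\setminus C$ instead. The block-inverse Schur-complement identity gives
\[
\bigl((A^{-1})_{V\setminus C}\bigr)^{-1}
\;=\;
A_{V\setminus C}\;-\;A_{V\setminus C,\,C}\,A_C^{-1}\,A_{C,\,V\setminus C},
\]
and the $(i,j)$ entry of the right-hand side is exactly $\langle\vectororth[C]{f(v_i)},\vectororth[C]{f(v_j)}\rangle$. Since $A^{-1}=\adj_\epsilon(\graphG_P)$, this yields $a\orthoind b\mid C$ iff $\bigl(\adj_\epsilon(\graphG_P)_{V\setminus C}\bigr)^{-1}_{ab}=0$; applying the perfect-perturbation definition with $\mathcal I=V\setminus C$ (whose complement is precisely $C$) then gives $a\graphind b\mid C$ as desired. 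With this correction your steps (iv)--(v) go through verbatim, and your side remark about needing $A\succ 0$ so that $\Sigma^{1/2}$ is real is a reasonable caveat.
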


\begin{remark}
One can always normalize these embeddings to have unit norms without changing the partial orthogonality structures.
\end{remark}

Finding a perfect perturbation factor might seem daunting, but the following lemma, which is a direct consequence of Theorem 1 in \citet{lnvenivcka2007gaussian}, shows that almost every $\epsilon$ is a perfect perturbation factor.
\begin{restatable}{lemma}{allmostperfect}
For any simple graph $G$, $\epsilon$ is perfect for all but finitely many $\epsilon \in \mathbb{R}$.
\end{restatable}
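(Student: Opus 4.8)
The plan is to peel the definition of \emph{perfect perturbation factor} down to finitely many polynomial-in-$\epsilon$ (non)vanishing conditions, dispatch the ones coming from matrix invertibility and from graph separation by elementary means, and invoke Theorem~1 of \citet{lnvenivcka2007gaussian} for the one genuinely nontrivial condition. Write $M(\epsilon) \defeq \adj_{\epsilon}(G) = \mathds{1} + \epsilon\,\adj(G)$; this matrix is symmetric, equals $\mathds{1}$ at $\epsilon = 0$, and for $\epsilon \neq 0$ its off-diagonal support is exactly the edge set of $G$. Unwinding the definition, $\epsilon$ is a perfect perturbation factor exactly when: (i) $\det M(\epsilon) \neq 0$; (ii) $\det M(\epsilon)_{\mathcal{I}} \neq 0$ for every $\mathcal{I} \subseteq [n]$, so that the submatrix inverses below are defined; and (iii) for every such $\mathcal{I}$, writing $\mathcal{I}^c = [n] \setminus \mathcal{I}$, and for every pair of distinct index positions $i, j$ in $\mathcal{I}$, one has $(M(\epsilon)_{\mathcal{I}})^{-1}_{ij} = 0$ if and only if $v_{\mathcal{I}_i} \graphind v_{\mathcal{I}_j} \mid \{v_k : k \in \mathcal{I}^c\}$. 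Each $\det M(\epsilon)_{\mathcal{I}}$ is a polynomial in $\epsilon$ of degree at most $n$ taking the value $1$ at $\epsilon = 0$, hence is not identically zero and has at most $n$ real roots; since there are finitely many $\mathcal{I}$, conditions (i)--(ii) exclude only finitely many $\epsilon$. Call the remaining $\epsilon$ \emph{admissible}.

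For (iii), fix $\mathcal{I}$ and distinct positions $i, j$. For admissible $\epsilon$ we may write $(M(\epsilon)_{\mathcal{I}})^{-1}_{ij} = C_{\mathcal{I},i,j}(\epsilon) / \det M(\epsilon)_{\mathcal{I}}$, where $C_{\mathcal{I},i,j}(\epsilon)$ is, up to sign, the corresponding cofactor of $M(\epsilon)_{\mathcal{I}}$ and hence a polynomial in $\epsilon$ of degree at most $n$; thus the vanishing of the inverse entry is governed entirely by the polynomial $C_{\mathcal{I},i,j}$. Consider first the implication ``$\mathcal{I}^c$ separates $v_{\mathcal{I}_i}$ from $v_{\mathcal{I}_j}$ in $G$ $\Rightarrow (M(\epsilon)_{\mathcal{I}})^{-1}_{ij} = 0$''. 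For all sufficiently small $\epsilon > 0$ we have $M(\epsilon) \succ 0$, so $M(\epsilon)$ is the concentration matrix of a Gaussian Markov random field with graph $G$; the conditional law of $X_{\mathcal{I}}$ given $X_{\mathcal{I}^c}$ is Gaussian with covariance $(M(\epsilon)_{\mathcal{I}})^{-1}$, and the global Markov property gives $X_{\mathcal{I}_i} \perp X_{\mathcal{I}_j} \mid X_{\mathcal{I}^c}$, i.e.\ $(M(\epsilon)_{\mathcal{I}})^{-1}_{ij} = 0$, i.e.\ $C_{\mathcal{I},i,j}(\epsilon) = 0$. Since this holds on a nondegenerate interval, $C_{\mathcal{I},i,j} \equiv 0$, so the inverse entry vanishes for \emph{every} admissible $\epsilon$; hence this half of (iii) contributes no further exceptional values.

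There remains the implication ``$(M(\epsilon)_{\mathcal{I}})^{-1}_{ij} = 0 \Rightarrow$ separation'', equivalently ``non-separation $\Rightarrow C_{\mathcal{I},i,j}(\epsilon) \neq 0$''. It therefore suffices to know that whenever $\mathcal{I}^c$ does \emph{not} separate $v_{\mathcal{I}_i}$ from $v_{\mathcal{I}_j}$ in $G$, the polynomial $C_{\mathcal{I},i,j}$ is not identically zero: then it has at most $n$ roots and $(M(\epsilon)_{\mathcal{I}})^{-1}_{ij} \neq 0$ for all but finitely many admissible $\epsilon$. This is exactly the statement that, for generic $\epsilon$, the conditional independences of the Gaussian with concentration matrix $\mathds{1} + \epsilon\,\adj(G)$ coincide precisely with the separations of $G$ --- equivalently, that the line $\epsilon \mapsto \mathds{1} + \epsilon\,\adj(G)$ is not contained in the exceptional algebraic set of non-perfect concentration matrices supported on $G$ --- which is what Theorem~1 of \citet{lnvenivcka2007gaussian} provides. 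Taking the union of the finitely many bad $\epsilon$ arising in (i), (ii), and in this last half of (iii) --- over the finitely many choices of $\mathcal{I}$ and non-separated pairs $(i,j)$ --- yields a finite set outside of which $\epsilon$ is a perfect perturbation factor. The main obstacle is precisely this last step: everything else is polynomial bookkeeping plus the classical Markov property of Gaussian graphical models, whereas the non-separation direction hinges on the non-triviality of the cofactor polynomials along the \emph{specific} one-parameter family $\mathds{1} + \epsilon\,\adj(G)$, and the thing to verify when invoking \citet{lnvenivcka2007gaussian} is that their genericity result applies to (or implies genericity along) this family rather than only to the full space of concentration matrices with a prescribed zero pattern.
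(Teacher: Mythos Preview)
Your argument matches the paper's, which records the lemma as ``a direct consequence of Theorem~1 in \citet{lnvenivcka2007gaussian}'' with no further detail; you have supplied the unpacking the paper omits. The worry you raise at the end---that genericity in the full cone of concentration matrices supported on $G$ need not restrict to the single line $\epsilon \mapsto \mathds{1} + \epsilon\,\adj(G)$---is fair in the abstract but not an obstacle here, for two independent reasons. First, the construction in \citet{lnvenivcka2007gaussian} proceeds precisely by exhibiting faithfulness along this one-parameter family for all sufficiently small $\epsilon>0$, so their result already lives on your line. Second, you can close the gap with no external citation at all: for small $\epsilon$ the Neumann series gives $\bigl(M(\epsilon)_{\mathcal{I}}\bigr)^{-1}_{ij} = \sum_{m\ge 0}(-\epsilon)^m\bigl(\adj(G)_{\mathcal{I}}\bigr)^m_{ij}$, and if $\mathcal{I}^c$ fails to separate $v_{\mathcal{I}_i}$ from $v_{\mathcal{I}_j}$ in $G$ there is a shortest walk of some length $k\ge 1$ between them inside $G[\mathcal{I}]$, so the $\epsilon^k$-coefficient equals $(-1)^k\bigl(\adj(G)_{\mathcal{I}}\bigr)^k_{ij}\neq 0$ and $C_{\mathcal{I},i,j}$ is not the zero polynomial. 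Either way your bookkeeping then finishes the proof.
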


\subsection{Dimension Reduction of IPE}
\cref{thm:constructipe} shows how to learn a perfect IPE but it requires the dimension of embeddings to be the same as the number of variables in $V$. In the worst case, this is inevitable for a faithful IPE map: 
If the random variables in $V$ are mutually independent, then we need at least $|V|$ dimensions in the embedding space to contain $V$ orthogonal vectors.

But this is not practical. Suppose we want to embed millions of random variables (e.g. tokens) in a vector space, having the dimension of each embedding be in the magnitude of millions is less than ideal. Therefore, one needs to do dimension reduction.

In this section, we show that by using random projection, the partial orthogonalities induced by Markov boundaries are preserved approximately. Intuitively, this is guaranteed by the Johnson-Lindenstrauss lemma \cite{Vempala2005TheRP}. 

\begin{restatable}{theorem}{dimIPE}
\label{thm:dimIPE}
Let $U$ be a set of vectors in $\mathbb{R}^n$ where $n = |U|$ and every vector is a unit vector. Let $\Sigma$ be a matrix in $\mathbb{R}^{n \times n}$ where $\Sigma_{ij} = \langle u_i,  u_j \rangle$. Assume $\lambda_1 = \lambda_{\min}(\Sigma) > 0$. Then there exists a mapping $g: \mathbb{R}^n \to \mathbb{R}^k $ where $k =  \ceil{20 \log (2n) / (\epsilon')^2}$ with $\epsilon' = \min\{1/2, \epsilon/C, \lambda_1/2r^2\}$ and $\epsilon \in (0, 1)$ such that for any $u_i \in U$ with its unique Markov boundary $M_i \subseteq U$ and any $u_j \in U \setminus (\{ u_i \} \cup M_i)$, we have
\begin{equation*}
   \bigg\vert\Big\langle \vectororth[g(M_i)]{g(u_i)},  \vectororth[g(M_i)]{g(u_j)} \Big\rangle\bigg\vert \leq \epsilon
\end{equation*}
where $r_i = |M_i|$, $r = \max_i |M_i|$ and $C = (r + 1)^3 (\frac{2\lambda_{\max}(\Sigma) +  2(r + 1)^2}{\lambda_{\min}(\Sigma)})^{r}$. 

\end{restatable}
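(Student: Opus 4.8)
The plan is to use the Johnson--Lindenstrauss lemma to construct the map $g$ by randomly projecting $\mathbb{R}^n$ down to $\mathbb{R}^k$, and then control how the inner product of residuals after projecting onto $g(M_i)$ deviates from its exact value $\langle \vectororth[M_i]{u_i}, \vectororth[M_i]{u_j}\rangle = 0$. Since $M_i$ is the \emph{unique} Markov boundary of $u_i$ (which exists here because we assume $\lambda_{\min}(\Sigma) > 0$, so $U$ is linearly independent and partial orthogonality is a graphoid by \cref{thm:unique-mb}), the exact residual inner product vanishes, so the whole task is a perturbation/stability estimate: show that an $\epsilon'$-JL embedding of the finite point set $\{u_i\} \cup \{u_i \pm u_j\}$ (a set of size $O(n^2)$, so $k = \lceil 20\log(2n)/(\epsilon')^2\rceil$ suffices after adjusting constants) perturbs all pairwise inner products by at most $\epsilon'$, and then propagate this perturbation through the projection formula.

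First I would set up the algebra of the residual inner product. Writing the projection onto $\spn(g(M_i))$ in terms of the Gram matrix, we have $\langle \vectororth[g(M_i)]{g(u_i)}, \vectororth[g(M_i)]{g(u_j)}\rangle = \langle g(u_i), g(u_j)\rangle - b_i^\top G_{M_i}^{-1} b_j$, where $G_{M_i}$ is the $r_i \times r_i$ Gram matrix of $\{g(m) : m \in M_i\}$ and $b_i, b_j$ are the vectors of inner products $\langle g(u_i), g(m)\rangle$, $\langle g(u_j), g(m)\rangle$. The exact analogue with $g$ removed equals $0$. So I need to bound the difference of the two bilinear forms. Each scalar entry ($\langle g(u_i),g(u_j)\rangle$, the entries of $b_i$, $b_j$, and the entries of $G_{M_i}$) differs from its exact counterpart by at most $\epsilon'$ in absolute value by the JL guarantee applied to unit vectors (and to the $u\pm u'$ trick for off-diagonal terms). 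The exact Gram matrix $\Sigma_{M_i}$, a principal submatrix of $\Sigma$, has smallest eigenvalue $\geq \lambda_1$ by eigenvalue interlacing, so it is well-conditioned, and once $\epsilon' \leq \lambda_1/2r^2$ the perturbed $G_{M_i}$ is also invertible with $\|G_{M_i}^{-1}\| \leq 2/\lambda_1$. A standard resolvent bound then gives $\|G_{M_i}^{-1} - \Sigma_{M_i}^{-1}\| \lesssim r\epsilon'/\lambda_1^2$, and combining with $\|b_i\|, \|b_j\| \leq \sqrt{r}(1 + \text{small})$ and the perturbations of $b_i, b_j$ themselves, one gets a bound of the form $|\langle \vectororth[g(M_i)]{g(u_i)}, \vectororth[g(M_i)]{g(u_j)}\rangle| \leq C \epsilon'$ where $C$ is exactly the polynomial-in-$r$, exponential-in-$r$ constant stated in the theorem. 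Finally, setting $\epsilon' = \min\{1/2, \epsilon/C, \lambda_1/2r^2\}$ makes the right-hand side $\leq \epsilon$, and also ensures the intermediate invertibility and conditioning steps are valid; the factor $1/2$ is there so JL distortions stay in a regime where $(1+\epsilon')/(1-\epsilon')$-type bounds are controlled by a universal constant.

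The main obstacle I anticipate is getting the constant $C$ to come out in precisely the stated form $(r+1)^3\big(\frac{2\lambda_{\max}(\Sigma) + 2(r+1)^2}{\lambda_{\min}(\Sigma)}\big)^{r}$ rather than just "some constant depending on $r$, $\lambda_{\min}$, $\lambda_{\max}$." The exponential-in-$r$ factor is the telltale sign that the bound on $\|G_{M_i}^{-1}\|$ is not obtained via a clean eigenvalue argument but rather via the adjugate/cofactor expansion of the inverse — i.e., bounding $\det$ and each cofactor of the perturbed $G_{M_i}$ directly using Hadamard-type inequalities, which naturally produces $\lambda_{\max}^{r}$-style growth and the $(r+1)^2$ and $(r+1)^3$ bookkeeping terms. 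So the delicate part is carrying out that determinant-based inverse bound carefully enough to land on the exact expression, while keeping track of the $+1$'s coming from including $u_i$ (or $u_j$) together with $M_i$ in various intermediate matrices. The JL application itself and the interlacing argument are routine; the cofactor bookkeeping is where the real work is.
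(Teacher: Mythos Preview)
Your overall plan is correct and tracks the paper's proof closely: apply a JL-type lemma to control all pairwise inner products within $\epsilon'$, use Weyl/interlacing to ensure the perturbed Gram submatrix $\tilde\Sigma_{M_i}$ stays invertible with $\lambda_{\min}(\tilde\Sigma_{M_i}) > \lambda_1/2$ once $\epsilon' \leq \lambda_1/(2r^2)$, and then propagate the perturbation through the residual inner product.

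The one point of divergence is the mechanism you use to obtain the stated constant $C$. You propose to perturb the bilinear form $b_i^\top G_{M_i}^{-1} b_j$ directly via a resolvent bound on $G_{M_i}^{-1}$, and you conjecture that the exponential-in-$r$ shape of $C$ comes from an adjugate/cofactor expansion of the inverse. The paper does neither: it instead rewrites the residual inner product as a ratio of determinants via the Schur complement,
\[
\Big\langle \vectororth[g(M_i)]{g(u_i)},\, \vectororth[g(M_i)]{g(u_j)} \Big\rangle \;=\; \frac{\det \tilde\Sigma_{(i,M_i),(j,M_i)}}{\det \tilde\Sigma_{M_i}},
\]
observes that the unperturbed numerator $\det \Sigma_{(i,M_i),(j,M_i)}$ vanishes (since the exact residual inner product is zero), and bounds the perturbed numerator using the determinant perturbation inequality of Ipsen--Rehman (2008, Theorem~2.12), namely $|\det(A+E)-\det A|\le m\,\|E\|\,\max\{\|A\|,\|A+E\|\}^{m-1}$ with $m=r_i+1$. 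The denominator is lower-bounded by $(\lambda_1/2)^{r_i}$ directly from the eigenvalue bound. This is precisely what produces the $(r+1)^3$ prefactor and the ratio $\big(\tfrac{2\lambda_{\max}(\Sigma)+2(r+1)^2}{\lambda_{\min}(\Sigma)}\big)^{r}$. Your resolvent route would certainly yield \emph{a} valid bound---indeed, likely a sharper one, polynomial rather than exponential in $r$---but it will not reproduce the exact $C$ in the statement. If matching the stated constant is the goal, replace the resolvent step with the Schur-complement/determinant-perturbation argument; otherwise your approach is a legitimate (and arguably cleaner) alternative.
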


\cref{thm:dimIPE} shows that as long as the partial orthogonality structure of embeddings is sparse in the sense that the size of the Markov boundary for each embedding is small. Then one can reduce the dimension of the embedding and the residuals of target and test vectors after projection onto the Markov boundary are \emph{almost orthogonal}.

\begin{remark}
The assumption in \cref{thm:dimIPE} is satisfied by the construction of IPE in \cref{sec:constructipe}.
\end{remark}

\section{Experiments}
\label{sec:exp}
One of the central hypotheses of the paper is that the partial orthogonality of embeddings, and its byproduct generalized Markov boundary,  carry semantic information. To verify this claim, we provide both \emph{quantitative} and \emph{qualitative} experiments. Throughout this section, we consider the set of normalized embeddings $\embSpace$ that represent the $49815$ words in the Brown corpus \cite{francis1979brown}. For each target embedding of a word, under any experiment setting, we automatically filter words, whose embeddings have $0.9$ or above cosine similarities with the target embedding, or words, whose Wu-Palmer similarity measure with the target word is almost $1$. The purpose of this filtering step is to prevent the inclusion of synonyms.

\subsection{Semantic Structure of Partial Orthogonality}
To examine the rule of partial orthogonality, nine categories are chosen, each with 10 words in it (See \cref{tab:po-category} in \cref{appendix:exp}). Specifically, each word within a given category is a hyponym for that category in WordNet \cite{wordnet}. We assess how much, on average, the cosine similarities between words within each category decrease when conditioned on these different nine categories. By conditioning, we use the clip embedding of the category word of interest and project out the subspace of that clip embedding. The results are shown in \cref{fig:po_qa}. We normalize reduction values by sampling 10,000 embeddings and calculating the mean and standard deviation of cosine reductions between these embeddings. It is apparent that on average, cosine similarities of intra-category words decrease more than inter-category words. One interesting finding is that when conditioned on the category word ``food'', the average similarities between word pairs in ``beverage'' also drop considerably. We suspect this is because one synset of ``food'' is also a hypernom of ``beverage''. Although words in the ``food'' category are chosen to mean solid food, it could also mean nutrient which also encompasses the meaning of ``beverage''.

\begin{figure}[h]
    \captionsetup{width=0.45\textwidth}
    \begin{minipage}{0.48\textwidth} 
    \centering
    \includegraphics[width=0.8\linewidth]{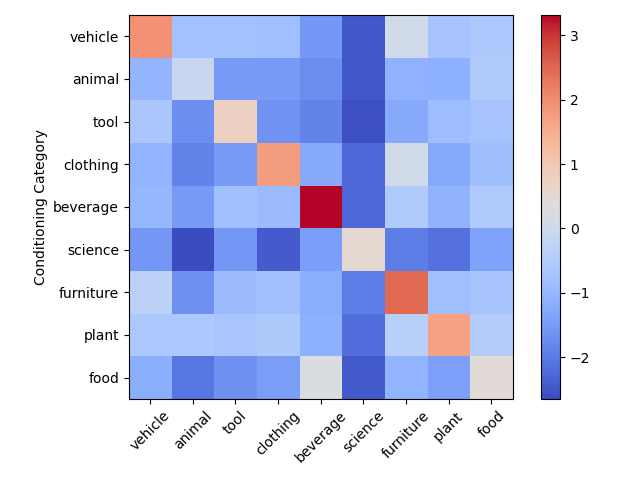}
    \captionof{figure}{Experiments show conditioning on the category word, cosine similarities of intra-category words decrease more than inter-category words. Each row shows the (normalized) average cosine similarities reduction between words within each category when conditioned on the category word of that row.}
    \label{fig:po_qa}   
    \end{minipage}
    \hfill
    \begin{minipage}{0.48\textwidth} 
        \centering
        \includegraphics[width=0.8\textwidth]{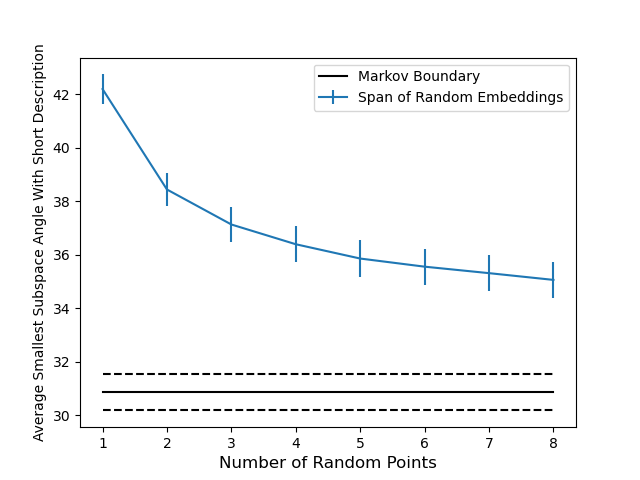}
        \caption{Experiments show that learned generalized Markov boundaries have on average smaller principal angles with the description embedding
        compared to subspaces spanned by randomly selected embeddings. The standard errors are over 50 examples.}
        \label{fig:mb_qa}
    \end{minipage}
    \label{fig:main}
\end{figure}

\subsection{Sampling Random Subspaces}
\label{sec:random-proj}
The first step of \cref{alg:find_mb} is to find embeddings that have high similarities with the target embeddings even after projecting onto orthogonal complements of subspaces spanned by randomly selected embeddings. It turns out that this step can reveal semantic meanings. In this section, we design experiments to show both quantitatively and qualitatively that embeddings of words that remain highly correlated with the target embedding after projection are semantically closer to the target word. In various experimental configurations, we employ $10$ sets of $50$ randomly chosen embeddings to form random projection subspaces for each target embedding. Qualitatively, \cref{tab:random-proj} in \cref{appendix:exp} gives a few examples showing that the words that on average remain highly correlated with the target word tend to possess greater semantic significance. Quantitatively, we calculate the average Wu-Palmer similarities between target words and the top 10 correlated words before and after random projections. We conduct experiments on 1000 random words as well as 300 common nouns provided by ChatGPT. The results are shown in \cref{tab:random-proj} verify our claims. This set of experiments also indirectly shows that the embeddings are noisy and that generalized Markov boundaries are indeed needed.

\begin{table}[h]
\caption{Experiments show that the top 10 words that have, on average, high correlations with target words after projecting onto the orthogonal complements of randomly selected linear subspaces have higher Wu-Palmer similarities with the target words than the top 10 highly correlated words without projections. This table contains average Wu-Palmer similarities with standard errors.}
\label{tab:random-proj}
\begin{center}
\begin{small}
\begin{tabular}{l|c|cr}
\toprule
Target & {Before Projection} & {After Projection}\\
\midrule
Random Words   & $0.223 \pm 0.006$ & $0.245 \pm 0.007$\\
\midrule
Common Nouns  & $0.343 \pm 0.008$ &  $0.422 \pm 0.008$\\
\bottomrule
\end{tabular}
\end{small}
\end{center}
\end{table}

\subsection{Generalized Markov Boundaries}
\label{sec:semantic-meaning}

We first demonstrate that \cref{alg:find_mb} can find generalized Markov boundaries. The experiments are run over $1000$ randomly selected words. In particular, \cref{tab:K} shows that with a relatively small candidate set, the algorithm can already approximate generalized Markov boundaries well, suggesting that the size of generalized Markov boundaries for CLIP text embeddings should be small.

\paragraph{Semantic Meanings of Markov Boundaries}
The estimated generalized Markov boundaries returned by \cref{alg:find_mb} is a set of embeddings. It is reasonable to anticipate that the linear spans of these embeddings hold semantic meanings. To evaluate this hypothesis, we propose to calculate the smallest principal angles \citep{knyazev2002principal} between the span of generalized Markov boundaries and the span of selected embeddings that are meaningful to the target word. 

We again conducted both quantitative and qualitative experiments. Qualitatively, \cref{fig:subspace} in \cref{appendix:exp} give a few examples comparing target words' generalized Markov boundaries with the span of selected embeddings. For instance, the generalized Markov boundary of `car' is more aligned with the subspace spanned by embeddings of `road' and `vehicle' than the span of `sea' and `boat' and randomly selected subspaces. This suggests that the estimated generalized Markov boundaries hold semantic significance. To verify this quantitatively, we ask ChatGPT to provide a list of common nouns with short descriptions (selected examples are provided in \cref{tab:selected-description}). We then use CLIP text embedding to convert the description sentence into one vector and compare the smallest angle between the description vector with generalized Markov boundaries and random linear spans. \cref{fig:mb_qa} shows that the generalized Markov boundaries are more semantically meaningful than random subspaces. 

\begin{table}[h]
    \caption{With relatively small number of $K$, the average $\cosSimAll_\mathcal{M}(v, \embSpace)$ is small. The standard errors are over $1000$ experiments.}
    \label{tab:K}
    \begin{center}
    \begin{small}
    \begin{sc}
    \begin{tabular}{lcccccr}
    \toprule
    $K$ & 1 & 3 & 5 & 8 & 10\\
    \midrule
    Average $\cosSimAll_\mathcal{M}(v, \embSpace)$ & 0.345$\pm$0.03 & 0.128$\pm$0.03 & 0.054$\pm$0.002 & 0.015$\pm$0.002 & 0.008$\pm$0.001\\
    \bottomrule
    \end{tabular}
    \end{sc}
    \end{small}
    \end{center}
    \end{table}

\section{Conclusion}
This paper studies the role of partial orthogonality in analyzing embeddings. Specifically, we extend the idea of Markov boundaries to embedding space. Unlike Markov boundaries in graphical models, the boundaries for embeddings are not guaranteed to be unique. We propose alternative relaxed definitions of Markov boundaries for practical use. Empirically, these tools prove to be useful in finding the semantic meanings of embeddings. We also introduce the concept of independence preserving embeddings where embeddings use partial orthogonalities to preserve the conditional independence structures of random variables. This opens the door for substantial future work. In particular, one promising theoretical direction is to study if CLIP text embeddings preserve the structures in the training distributions.

\section{Acknowledgments}
This work is supported by ONR grant N00014-23-1-2591, Open Philanthropy, NSF IIS-1956330, NIH R01GM140467, and the Robert H. Topel Faculty Research Fund at the University of Chicago Booth School of Business.
\printbibliography

\newpage
\appendix

\section{Additional Proofs}
\label{appen:proofs}

\begin{lemma}
\label{lemma:mb-is-gmb}
Let $v$ be an element in $\embSpace$, and $\mathcal{M}$ a Markov boundary of $v$, then $\mathcal{M}$ is also a generalized Markov boundary.
\end{lemma}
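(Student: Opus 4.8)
The plan is to check the two requirements of \cref{defn:generalizedMB} for an arbitrary classical Markov boundary $\mathcal{M}$ of $v$: (i) that $\cosSimAll_{\mathcal{M}}(v,\embSpace)=0$, and (ii) that $\mathcal{M}$ is inclusion-minimal among the subsets of $\embSpace\setminus\{v\}$ that make this average vanish.

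Requirement (i) is the ``evident'' half and I would dispatch it directly. Since $\mathcal{M}$ is a Markov blanket of $v$ under $\orthoind$, we have $v\orthoind(\embSpace\setminus(\{v\}\cup\mathcal{M}))\vert\mathcal{M}$; applying the Decomposition axiom (A2) gives $v\orthoind u\vert\mathcal{M}$ for every single $u\in\embSpace_{\mathcal{M}}^{v}$, which by the definition of partial orthogonality means $\langle\vectororth[\mathcal{M}]{v},\vectororth[\mathcal{M}]{u}\rangle=0$. Hence every summand $\cosSim_{\mathcal{M}}(v,u)$ equals $0$ -- with the convention that a cosine similarity involving a vanishing residual is taken to be $0$, the only subtlety being the case $v\in\spn(\mathcal{M})$ -- so $\cosSimAll_{\mathcal{M}}(v,\embSpace)=0$.

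For requirement (ii) I would argue by contradiction: suppose some proper subset $\mathcal{M}'$ of $\mathcal{M}$ also satisfies $\cosSimAll_{\mathcal{M}'}(v,\embSpace)=0$. Because $\mathcal{M}$ is a Markov \emph{boundary}, i.e.\ an inclusion-minimal Markov blanket, $\mathcal{M}'$ is not a Markov blanket, so some $u^{\star}\in\embSpace\setminus(\{v\}\cup\mathcal{M}')$ has $\langle\vectororth[\mathcal{M}']{v},\vectororth[\mathcal{M}']{u^{\star}}\rangle\neq 0$; in particular $\vectororth[\mathcal{M}']{v}\neq 0$. Factoring $\|\vectororth[\mathcal{M}']{v}\|$ out of the average shows $\cosSimAll_{\mathcal{M}'}(v,\embSpace)$ equals, up to a positive scalar, the inner product of $\vectororth[\mathcal{M}']{v}$ with the sum over $u\in\embSpace_{\mathcal{M}'}^{v}$ of the unit residuals $\vectororth[\mathcal{M}']{u}/\|\vectororth[\mathcal{M}']{u}\|$; the hypothesis therefore says $\vectororth[\mathcal{M}']{v}$ is orthogonal to that sum.

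This is the crux, and where I expect the main obstacle to lie: a single nonzero residual inner product does not by itself contradict the averaged inner product vanishing, since distinct summands may cancel. I see two ways to finish. The clean route is to restrict to the regime that makes the classical Markov boundary meaningful -- the distortion-free, elementwise setting, i.e.\ a mild general-position assumption on $\embSpace$ under which an averaged cosine similarity vanishes only if every term does; then $\cosSimAll_{\mathcal{M}'}(v,\embSpace)=0$ forces $\langle\vectororth[\mathcal{M}']{v},\vectororth[\mathcal{M}']{u}\rangle=0$ for all $u\in\embSpace_{\mathcal{M}'}^{v}$, making $\mathcal{M}'$ a Markov blanket and contradicting the minimality of $\mathcal{M}$. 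The hands-on alternative is to reduce to the co-atoms $\mathcal{M}'=\mathcal{M}\setminus\{m\}$ and exploit the one-dimensional orthogonal decomposition $\vectororth[\mathcal{M}']{v}=\vectororth[\mathcal{M}]{v}+c\,\vectororth[\mathcal{M}']{m}$ (with $c\neq 0$ forced because $\mathcal{M}'$ fails to be a Markov blanket, using $\vectororth[\mathcal{M}]{v}\perp\spn(\mathcal{M})$): one then checks that $\cosSimAll_{\mathcal{M}'}(v,\embSpace)$ is a nonzero multiple of $1+\sum_{u\in\embSpace_{\mathcal{M}}^{v}}\cosSim_{\mathcal{M}'}(m,u)$, and must argue this sum cannot equal $-1$.
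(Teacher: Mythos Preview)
Your step (i) is the paper's entire argument: it observes that every term $\cosSim_{\mathcal{M}}(v,u)$ vanishes, hence so does $\cosSimAll_{\mathcal{M}}(v,\embSpace)$, and then simply declares $\mathcal{M}$ a generalized Markov boundary. Minimality is never mentioned.

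Your concern about (ii) is therefore sharper than the paper, and the worry is real: without extra hypotheses the minimality clause of \cref{defn:generalizedMB} can fail for a classical Markov boundary. In $\mathbb{R}^3$ take $\embSpace=\{v,m,u_1,u_2\}$ with $v=e_1$, $m=(1,1,0)/\sqrt{2}$, $u_1=-m$, $u_2=e_3$. One has $\vectororth[\{m\}]{v}=(\tfrac{1}{2},-\tfrac{1}{2},0)$, $\vectororth[\{m\}]{u_1}=0$, $\vectororth[\{m\}]{u_2}=e_3$, so $\{m\}$ is a Markov blanket, and $\langle v,m\rangle\neq 0$ rules out $\emptyset$, so $\{m\}$ is a Markov boundary. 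Yet $\cos(v,m)+\cos(v,u_1)+\cos(v,u_2)=\tfrac{1}{\sqrt{2}}-\tfrac{1}{\sqrt{2}}+0=0$, so $\emptyset\subsetneq\{m\}$ already satisfies the averaged condition and $\{m\}$ is not minimal in the sense of \cref{defn:generalizedMB}. In this same example your hands-on quantity $1+\sum_{u\in\embSpace_{\mathcal{M}}^{v}}\cosSim_{\mathcal{M}'}(m,u)$ equals $1+(-1)+0=0$, so that route cannot close the argument either; and a general-position assumption strong enough to forbid such cancellations is not among the lemma's hypotheses.

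In short, the paper's one-line proof really only shows that a Markov boundary satisfies the averaged-zero condition---a generalized Markov \emph{blanket}, so to speak---which is precisely your (i). The minimality half is silently skipped and, as the example shows, does not hold in general.
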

\begin{proof}
By definition, for any $u \in V \setminus ( \{v\} \cup \mathcal{M})$, $\cosSim_\mathcal{M}(v, u) = 0$. Therefore, $\mathcal{M}$ is also a generalized Markov boundary.
\end{proof}

\sameproj*
\begin{proof}
To slightly abuse notation, we also use $\markovb_1$ to be a matrix where each column is an element in $\markovb_1$. We define $\markovb_2$ similarly. 

Because $\markovb_1$ and $\markovb_2$ are two distinct Markov boundaries, they must not be empty. Therefore, $v_i \notorthoind \markovb_1$ and  $v_i \notorthoind \markovb_2$. By the definition of Markov boundary, we also have $v_i \orthoind \markovb_1 \vert \markovb_2$ and $v_i \orthoind \markovb_2 \vert \markovb_1$. Note that $\markovb_2$ and $\markovb_1$ must have full rank, otherwise, they are not minimal.

Thus, 
\begin{equation*}
\begin{split}
    &\langle \vectororth[\markovb_1]{v_i}, \; \vectororth[\markovb_1]{v_j} \rangle = 0, \; \forall v_j \in \markovb_2 \\
    \Longleftrightarrow \;  &\langle \vectororth[\markovb_1]{v_i}, \; v_j \rangle = 0, \; \forall v_j \in M_2 \\
    \Longleftrightarrow \; & v_i^T \markovb_1 (\markovb_1^T \markovb_1)^{-1} \markovb_1^T v_j = v_i^T v_j \; \forall v_j \in \markovb_2 \\
    \Longleftrightarrow \; & v_i^T \markovb_1 (\markovb_1^T \markovb_1)^{-1} \markovb_1^T \markovb_2 = v_i^T \markovb_2
\end{split}
\end{equation*}
With (compact) singular value decomposition, we have $\markovb_1 = U_1 \Sigma_1 V_1^T$ and $\markovb_2 = U_2 \Sigma_2 V_2^T$. Then,
\begin{equation*}
\begin{split}
    v_i^T \markovb_1 (\markovb_1^T \markovb_1)^{-1} \markovb_1^T \markovb_2 &= v_i^T U_1 U_1^T \markovb_2 = v_i^T \markovb_2 \\
    \Longleftrightarrow \; & v_i^T U_1 U_1^T U_2 = v_i^T U_2
\end{split}
\end{equation*}
Similarly, 
\begin{equation*}
    v_i^T U_2 U_2^T U_1 = v_i^T U_1
\end{equation*}

Therefore, 
\begin{equation*}
    v_i^T U_1 U_1^T U_2 U_2^T = v_i^T U_2 U_2^T
\end{equation*}

In other words, 
\begin{equation*}
    \vectorproj[\markovb_2]{\vectorproj[\markovb_1]{v_i}} = \vectorproj[\markovb_2]{v_i}
\end{equation*}

On the other hand, $v_i \notorthoind \markovb_1$. 
\begin{equation*}
    \vectorproj[\markovb_1]{v_i} = U_1 U_1^T v_i \neq 0
\end{equation*}
Similarly, 
\begin{equation*}
    \vectorproj[\markovb_2]{v_i} = U_2 U_2^T v_i \neq 0
\end{equation*}

Therefore, we must have, 
\begin{equation*}
    \vectorproj[\markovb_1]{v_i} \in \spn(\markovb_2)
\end{equation*}
which means,
\begin{equation*}
    \vectorproj[\markovb_2]{\vectorproj[\markovb_1]{v_i}} = \vectorproj[\markovb_1]{v_i} = \vectorproj[\markovb_2]{v_i}
\end{equation*}
\end{proof}

\projfullspan*
\begin{proof}
Because $v_i \in \spn (\embSpace \setminus \{ v_i\})$, then $v_i = \sum_{k=1}^m \alpha_k v_k$ with $\embSpace' = \{v_1, ..., v_m \} \subseteq \embSpace$.

Since $\markovb_1$ is a Markov boundary of $v_i$, 
\begin{equation*}
\begin{split}
    v_i^T \markovb_1 (\markovb_1^T \markovb_1)^{-1} \markovb_1^T v_k = v_i^T v_k \; \forall v_k \in \embSpace' \\
\end{split}
\end{equation*}

\begin{equation*}
\begin{split}
    v_i^T \markovb_1 (\markovb_1^T \markovb_1)^{-1} \markovb_1^T  \sum_{k=1}^m \alpha_k v_k &= v_i^T  \sum_{k=1}^m \alpha_k v_k \\
    \langle \vectorproj[\markovb_1]{v_i}, v_i \rangle &= \langle v_i, v_i \rangle
\end{split}
\end{equation*}
Thus, $\vectorproj[\markovb_1]{v_i} = v_i$.

\end{proof}

\subsection{Construction of IPE Map}
\label{appen:constructipe}

\constructipe*
\begin{proof}
Let $|V| = n$ and, to slightly abuse notation, we use index $i \in [n]$ to mean the vertex $v_i$ and the $i$-th embedding. And we use $A_{V_1, V_2}$ where $V_1, V_2 \subseteq V$ to mean the submatrix of $A_{\{i: v_i \in V_1\}, \{i: v_i \in V_2\} }$.

Because $G_p$ is a miminal $I$-map of $P$, we have $\mathcal{I}_{G}(V) \subseteq \mathcal{I}_{P}(V)$.

We just need to show $\mathcal{I}_{G}(V) = \mathcal{I}_{O}(f(V))$. And if $P$ is faithful to $G_p$, then $\mathcal{I}_{O}(f(V)) = \mathcal{I}_{G}(V) = \mathcal{I}_{P}(V)$.

If $v_i \graphind v_j | V'$ where $V' \subseteq V$, let $V^c = V \setminus V'$, then 
\begin{equation*}
    A = \begin{pmatrix}
        A_{V^c}, A_{V^c, V'}\\
        A_{V', V^c}, A_{V'}\\
    \end{pmatrix}
\end{equation*}

On the other hand, if $f(v_i) \orthoind f(v_j) | f(V')$, then 
\begin{equation}
\label{eqn:initial_eqn}
    f(v_i)^T f(v_j) - f(v_i)^T f(V') (f(V')^T f(V'))^{-1} f(V')^T f(v_j) = 0
\end{equation}

Note that by our construction, $ (f(V')^T f(V'))^{-1} = A_{V'}$ is invertible. We can write \cref{eqn:initial_eqn} as follows:
\begin{equation*}
\begin{split}
    f(v_i)^T f(v_j) &- f(v_i)^T f(V') (f(V')^T f(V'))^{-1} f(V')^T f(v_j) = A_{i, j} - A_{i, V'} A_{V'}^{-1}A_{V', j}
\end{split}
\end{equation*}

By Schur's complement, we have that

\begin{equation*}
    (\adj_{\epsilon}(G_P)_{V^c})^{-1} = (A^{-1})_{V^c}^{-1} = A_{V^c} - A_{V^c, V'} A_{V'}^{-1}A_{V', V^c}
\end{equation*}

Becasue $\epsilon$ is a perfect perturbation factor, by definition, $f(v_i) \orthoind f(v_j) | f(V')$ if and only if $v_i \graphind v_j | V'$. By the compositional property, we have that $\mathcal{I}_{G}(V) = \mathcal{I}_{O}(f(V))$.
\end{proof}

\allmostperfect*
\begin{proof}
This is a direct consequence of Theorem 1 in \cite{lnvenivcka2007gaussian}.
\end{proof}

\subsection{Dimension Reduction of IPE}
\label{appen:dimIPE}

\dimIPE*
\begin{proof}
Let $g$ be linear map of \cref{lemma:extension-JL} with error parameter $\epsilon' \in (0, \frac{1}{2})$. For convenience, let $\tilde{u}_i = g(u_i)$, $\tilde{u_j} = g(u_j)$ and $\tilde{M}_i = g(M_i)$. Let $r_i$ = $|M_i|$. To slightly abuse notation, we use $M_i$ and $\tilde{M}_i$ to also mean matrices where each column is an element in the set. Furthermore, we also define $\tilde{\Sigma}$ to be $\tilde{\Sigma}_{ij} = \langle \tilde{u}_i, \tilde{u}_j \rangle$. We use $\Sigma_{A, B}$ where $A, B \subseteq U$ to be a submatrix where the row indices are from $A$ and the column indices are from $B$, and when $A=B$, we just use  $\Sigma_{A}$ for simplicity. In particular, let $\Sigma_{(i, M_i), (j, M_i)} = \begin{pmatrix}
    \Sigma_{i, j}, &\Sigma_{i, M_i} \\
    \Sigma_{M_i, j}, &\Sigma_{M_i} \\
\end{pmatrix}$. And we can define a similar thing for $\tilde{\Sigma}$.

We first want to find $\epsilon'$ such that $\tilde{\Sigma}_{M_i}$ is non-singular for all $i \in |U|$. Note that for any $u_i \in U$, we know that by Weyl's inequality for eigenvalues \cite{horn2012matrix}, 
\begin{equation*}
    |\lambda_{\min}(\tilde{\Sigma}_{M_i}) - \lambda_{\min}(\Sigma_{M_i})| \leq \vert\vert \tilde{\Sigma}_{M_i} - \Sigma_{M_i} \vert\vert \leq \vert\vert \tilde{\Sigma}_{M_i} - \Sigma_{M_i} \vert\vert_{F} \leq r^2 \epsilon'
\end{equation*}
Thus, 
\begin{equation*}
    \lambda_{\min}(\tilde{\Sigma}_{M_i}) \geq \lambda_{\min}(\Sigma_{M_i}) - r^2 \epsilon \geq \lambda_{\min}(\Sigma) - r^2 \epsilon = \lambda_1 - r^2 \epsilon'
\end{equation*}
Therefore, if we want $\lambda_{\min}(\tilde{\Sigma}_{M_i}) > \frac{\lambda_1}{2}$ we need $\epsilon' < \frac{\lambda_1}{2r^2}$.

On the other hand, because $M_i$ is an Markov boundary for $u_i$, we have 
\begin{equation}
\label{eqn:before}
    u_i^T u_j - u_i^T M_i (M_i^T M_i)^{-1} M_i^Tu_j = 0 
\end{equation}
Note that $M_i$ must be full rank. Otherwise, we can find a subset of $M_i$ to be the Markov boundary. And there is a different way to write this. Remember that $\Sigma_{(i, M_i), (j, M_i)} = \begin{pmatrix}
    \Sigma_{i, j}, &\Sigma_{i, M_i} \\
    \Sigma_{M_i, j}, &\Sigma_{M_i} \\
\end{pmatrix}$. Using Schur's complement, we have that 
\begin{equation*}
\begin{split}
    \det(\Sigma_{(i, M_i), (j, M_i)}) &= \det(\Sigma_{M_i}) \det(\Sigma_{i, j} - \Sigma_{i, M_i}^T (\Sigma_{M_i})^{-1} \Sigma_{j, M_i}) \\ 
    &= \det(\Sigma_{M_i}) (u_i^T u_j - u_i^T M_i (M_i^T M_i)^{-1} M_i^Tu_j)
\end{split}
\end{equation*}

We want to estimate the following:
\begin{equation}
\label{eqn:after}
\begin{split}
    \vert \big\langle \vectororth[g(M_i)]{g(u_i)},  \vectororth[g(M_i)]{g(u_j)} \big\rangle \vert &= \vert \tilde{u}_i^T \tilde{u_j} - \tilde{u}_i^T \tilde{M}_i (\tilde{M}_i^T \tilde{M}_i)^{-1} \tilde{M}_i^T \tilde{u_j} \vert \\
    & = \vert \frac{\det(\tilde{\Sigma}_{(i, M_i), (j, M_i)})}{\det(\tilde{\Sigma}_{M_i})} \vert
\end{split}
\end{equation}

We already know that $\det(\tilde{\Sigma}_{M_i}) > (\frac{\lambda_1}{2})^{r_i}$. On the other hand, by Theorem 2.12 in \cite{ipsen2008perturbation}, we have that
\begin{equation*}
\begin{split}
&\vert \det(\tilde{\Sigma}_{(i, M_i), (j, M_i)}) \vert = \vert \det(\tilde{\Sigma}_{(i, M_i), (j, M_i)}) - \det(\Sigma_{(i, M_i), (j, M_i)})\vert \\
&\leq (r_i+1) \vert\vert \tilde{\Sigma}_{(i, M_i), (j, M_i)} - \Sigma_{(i, M_i), (j, M_i)} \vert \vert \max\{\vert\vert \Sigma_{(i, M_i), (j, M_i)} \vert \vert,  \vert\vert \tilde{\Sigma}_{(i, M_i), (j, M_i)} \vert \vert\}^{r_i}
\end{split}
\end{equation*}

By Weyl's inequality for singular values, we have that 
\begin{equation*}
\begin{split}
    \vert\vert \tilde{\Sigma}_{(i, M_i), (j, M_i)} \vert \vert &= \sigma_{\max}(\tilde{\Sigma}_{(i, M_i), (j, M_i)}) \leq \sigma_{\max}(\Sigma_{(i, M_i), (j, M_i)}) + (r_i + 1)^2 \epsilon' \\
    & \leq \lambda_{\max}(\Sigma) + (r_i + 1)^2 \epsilon' \leq  \lambda_{\max}(\Sigma) + (r_i + 1)^2
\end{split}
\end{equation*}

Thus,
\begin{equation*}
    \vert \det(\tilde{\Sigma}_{(i, M_i), (j, M_i)}) \vert \leq (r_i+1) (r_i+1)^2 \epsilon' (\lambda_{\max}(\Sigma) + (r_i + 1)^2)^{r_i}
\end{equation*}
And,
\begin{equation*}
    \vert \frac{\det(\tilde{\Sigma}_{(i, M_i), (j, M_i)})}{\det(\tilde{\Sigma}_{M_i})} \vert \leq \epsilon'\frac{(r_i + 1)^3 (\lambda_{\max}(\Sigma) + (r_i + 1)^2)^{r_i}}{(\frac{\lambda_1}{2})^{r_i}}
\end{equation*}

Let $C = (r + 1)^3 (\frac{2\lambda_{\max}(\Sigma) +  2(r + 1)^2}{\lambda_{\min}(\Sigma)})^{r}$. Then, 
\begin{equation*}
    \vert \frac{\det(\tilde{\Sigma}_{(i, M_i), (j, M_i)})}{\det(\tilde{\Sigma}_{M_i})} \vert \leq \epsilon' C
\end{equation*}

Let $\epsilon' = \min\{\frac{1}{2}, \frac{\epsilon}{C}, \frac{\lambda_1}{2r^2}\}$ and $k =  \ceil{\frac{20 \log (2n)}{(\epsilon')^2}}$, we have that
\begin{equation*}
    |\big\langle \vectororth[g(M_i)]{g(u_i)},  \vectororth[g(M_i)]{g(u_j)} \big\rangle| \leq \epsilon
\end{equation*}

\end{proof}

\begin{lemma}
\label{lemma:extension-JL}
Let $\epsilon \in (0, \frac{1}{2})$. Let $V \subseteq \mathbb{R}^d$ be a set of $n$ points and $k = \ceil{\frac{20 \log (2n)}{\epsilon^2}}$, there exists a linear mapping $g: \mathbb{R}^d \to \mathbb{R}^k$ such that for all $u, v \in V$:
\begin{equation*}
\begin{split}
    \vert \langle g(u), g(v) \rangle - \langle u, v \rangle \vert  \leq \epsilon \\
\end{split}
\end{equation*}
\end{lemma}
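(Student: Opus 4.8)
The plan is to obtain $g$ from the distributional Johnson--Lindenstrauss (JL) property together with the polarization identity. I would begin by recording that, since this lemma is invoked in \cref{thm:dimIPE} for a set $U$ of \emph{unit} vectors, it suffices to treat the case where every point of $V$ has norm at most $1$ (some such bound is needed for an additive inner-product estimate to be meaningful at all). Fix $g(x) = \tfrac{1}{\sqrt k} R x$ with $R \in \mathbb{R}^{k \times d}$ having i.i.d.\ $\mathcal{N}(0,1)$ entries. The only external input I would quote is the standard tail bound: for every fixed $x \in \mathbb{R}^d$ and $\delta \in (0,\tfrac12)$,
\[
\Pr\!\left[\, \bigl|\, \|g(x)\|^2 - \|x\|^2 \,\bigr| > \delta \|x\|^2 \,\right] \;\le\; 2\exp\!\bigl(-\tfrac{1}{8}\delta^2 k\bigr),
\]
which is the form available in \cite{Vempala2005TheRP} (the exponent is $(\delta^2-\delta^3)k/4 \ge \delta^2 k/8$ for $\delta \le 1/2$).

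Next I would apply this bound not to $V$ itself but to the auxiliary set $T = \{\,u+v : u,v \in V\,\} \cup \{\,u-v : u,v \in V\,\}$, which has at most $n^2$ distinct elements once we identify $x$ with $-x$ (irrelevant for norms). Taking $\delta = \epsilon$ and a union bound, the probability that $g$ fails to preserve $\|x\|^2$ to within relative error $\epsilon$ for some $x \in T$ is at most $2 n^2 \exp(-\epsilon^2 k/8)$. With $k = \lceil 20\log(2n)/\epsilon^2 \rceil$ this is at most $2 n^2 (2n)^{-5/2} = 2^{-3/2} n^{-1/2} < 1$, so a deterministic linear $g \colon \mathbb{R}^d \to \mathbb{R}^k$ exists preserving the squared norm of every element of $T$ up to relative error $\epsilon$. (If one uses a weaker explicit constant than $1/8$ in the tail bound, the leading factor $20$ still leaves slack for all $n \ge 1$ and all $\epsilon \in (0,\tfrac12)$; this is pure constant bookkeeping.)

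Finally I would pass from $T$ to inner products using linearity of $g$ and the polarization identity
\[
\langle g(u), g(v)\rangle - \langle u, v\rangle \;=\; \tfrac14\Bigl[\bigl(\|g(u+v)\|^2 - \|u+v\|^2\bigr) - \bigl(\|g(u-v)\|^2 - \|u-v\|^2\bigr)\Bigr].
\]
Bounding each difference by $\epsilon\|u\pm v\|^2$ and invoking the parallelogram law $\|u+v\|^2 + \|u-v\|^2 = 2\|u\|^2 + 2\|v\|^2 \le 4$ yields $|\langle g(u), g(v)\rangle - \langle u, v\rangle| \le \tfrac{\epsilon}{4}\bigl(\|u+v\|^2 + \|u-v\|^2\bigr) \le \epsilon$ for all $u,v \in V$, which is the claim.

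The only genuine obstacle here is arithmetic: checking that the constant $20$, combined with whatever explicit constant one fixes in the JL tail bound, beats the union bound over the roughly $n^2$ pairwise sums and differences simultaneously for every $n$ and every $\epsilon \in (0,\tfrac12)$. There is no conceptual difficulty beyond invoking JL and polarization, and the norm-at-most-one hypothesis (satisfied in the application) is what makes the additive bound go through.
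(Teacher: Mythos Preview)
Your proposal is correct and follows essentially the same route as the paper: the paper's one-line proof simply says to apply the JL lemma to the augmented set $V\cup(-V)$ (so that pairwise distances now include both $\|u-v\|$ and $\|u-(-v)\|=\|u+v\|$), which is exactly the polarization maneuver you carry out explicitly via the set $T$ of sums and differences. You also correctly flag the implicit norm-at-most-one assumption that the paper leaves unstated but which is satisfied in the application to \cref{thm:dimIPE}.
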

\begin{proof}
The proof is an easy extension of the JL lemma \cite{Vempala2005TheRP} by adding all the $-v_i$ for all $v_i \in V$ into the set $V$.
\end{proof}

\newpage
\section{Additional Experiments, Figures and Tables}
\label{appendix:exp}

\begin{table}[h]
    \caption{9 categories of words used to test the semantic meaning of partial orthogonality}
    \label{tab:po-category}
    \begin{center}
    \begin{small}
    \begin{tabular}{lc}
    \toprule
    Category & Words in Category\\
    \midrule
    `vehicle'   & `car', `bicycle', `skateboard', `motorcycle', `helicopter', \\ 
    &`truck', `boat', `airplane', `submarine', `scooter' \\
    \midrule
    `animal'   & `lion', `dolphin', `eagle', `dog', `elephant', \\ & `cat', `rat', `giraffe', `bird', `tiger' \\
    \midrule
    `tool'   & `hammer', `screwdriver', `wrench', `pliers', `hacksaw', \\ 
    &`drill', `chisel', `plunger', `trowel', `cutter' \\
    \midrule
    `clothing'   & `shirt', `pants', `dress', `sweater', `jacket', \\ 
    &`hat', `socks', `gloves', `scarf', `vest' \\
    \midrule
    `beverage'   & `coffee', `tea', `soda', `lemonade', `milk', \\ & `wine', `beer', `sake', `smoothie', `nectar' \\
    \midrule
    `science'   & `biology', `ecology', `genetics', `chemistry', `physics', \\ & `geology', `mathematics', `linguistics', `psychology', `cryptography' \\
    \midrule
    `furniture'   & `couch', `bed', `cabinet', `dresser', `hallstand', \\ & `lamp', `bench', `chair', `table', `closet'\\
    \midrule
    `plant'   & `daisy', `pine', `iris', `lily', `oak', \\ & `tulip', `fern', `rose', `bamboo', `cactus' \\
    \midrule
    `food'   & `chocolate', `meat', `steak', `pasta', `fish', \\ & `brisket', `sausage', `loaf', `roe', `lobster' \\
    \bottomrule
    \end{tabular}
    \end{small}
    \end{center}
    \end{table}

    \begin{table}[h]
        \caption{Experiments show that top correlated words with target words after projecting onto the orthogonal complements of randomly selected linear subspaces are more semantically meaningful}
        \label{tab:random-proj}
        \begin{center}
        \begin{small}
        \begin{tabular}{l|c|cr}
        \toprule
        Target & Top Correlated Words Before Projection & Top Correlated Words After Projection\\
        \midrule
        `eggplant'   & `potato', `banana', `grape', `vegetable' & `grape', `purple-black', `purple', `turnips'\\ 
                     & `bananas', `tomato', `espagnol' &  `plum', `lilac', `vegetable'\\
                     & `eternal', `potatoes', `e.g.'  & `vegetables' `banana', `ultra-violet'\\
        \midrule
        `king'   & `mister', `bossman', `thet', `thatt' & `royalty', `sport-king', `bossman', `kingan'\\ 
                     & `beast', `killed', `yesiree' &  `mister', `prince's', `princess'\\
                     & `bossed', `outdo', `queen's'  & `princes' `handsomest', `ruling'\\
        \midrule
        `advise'   & `spoken', `askin', `concur', `applies' & `guidelines', `guidance', `tips', `motto'\\ 
                     & `said', `according', `astute' &  `motivating', `encourages', `advising'\\
                     & `pertinent', `evident', `preached'  & `advisory' `self-help', `reminder'\\
        \midrule
        `work-out'   & `healthy', `weights', `worked', `time-on-the-job' & `gym', `weights', `footing', `running'\\ 
                     & `on-the-job', `work-success', `busy-work' &  `jogs', `dumbbells', `conditioning'\\
                     & `out'n', `healthiest', `hardworking'  & `body-building' `runing', `pumped-up'\\
        \midrule
        `poem'   & `!', `ya', `eh', `yes', `;' & `poems', `poetizing', `poetry's', `rhyming'\\ 
                     & `mem', `oh', `)', `poignant', `hee' &  `sonnet', `lyrics', `recited'\\
                     &  & `poetically' `sonnets', `rhyme'\\
        \bottomrule
        \end{tabular}
        \end{small}
        \end{center}
        \end{table}

    \begin{table}[h]
        \caption{Selected examples provided by ChatGPT when asked ``give me a list of 50 common nouns, each with a short description, and the first one is eggplant''}
        \label{tab:selected-description}
        \begin{center}
        \begin{small}
        \begin{tabular}{lc}
        \toprule
        Target Word & Description Sentence\\
        \midrule
        `eggplant'   & `A purple or dark-colored vegetable with a smooth skin, \\
        & often used in cooking and known for its mild flavor.' \\
        \midrule
        `dog'   & `A domesticated mammal often kept as a pet or used for various purposes.'\\
        \midrule
        `book'   & `A physical or digital publication containing written or printed content.'\\
        \midrule
        `car'   & `A motorized vehicle used for transportation on roads.'\\
        \midrule
        `tree'   & `A woody perennial plant with a main trunk and branches, usually producing leaves.'\\
        \midrule
        `house'   & `A building where people live, providing shelter and accommodation.'\\
        \midrule
        `computer'   & `An electronic device used for processing and storing data, and performing various tasks.'\\
        \midrule
        `cat'   & `A small domesticated carnivorous mammal commonly kept as a pet.'\\
        \midrule
        `chair'   & `A piece of furniture designed for sitting on, often with a backrest and four legs.'\\
        \midrule
        `phone'   & `A communication device that allows voice calls and text messaging.'\\
        \bottomrule
        \end{tabular}
        \end{small}
        \end{center}
        \end{table}

\begin{figure}[h]
    \centering
  \begin{subfigure}{.4\linewidth}
    \centering
    \includegraphics[width=\linewidth]{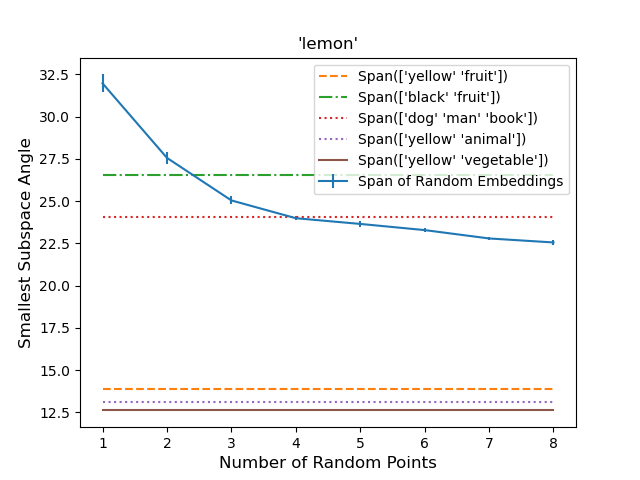}
    \caption{`lemon'}
    \label{fig:subspace-sub1}
  \end{subfigure}%
  \begin{subfigure}{.4\linewidth}
    \centering
    \includegraphics[width=\linewidth]{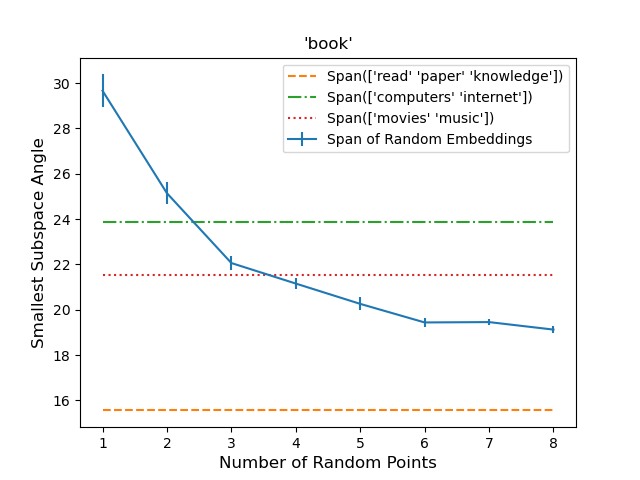}
    \caption{`book'}
    \label{fig:subspace-sub2}
  \end{subfigure}
  \vskip\baselineskip
  \begin{subfigure}{.4\linewidth}
    \centering
    \includegraphics[width=\linewidth]{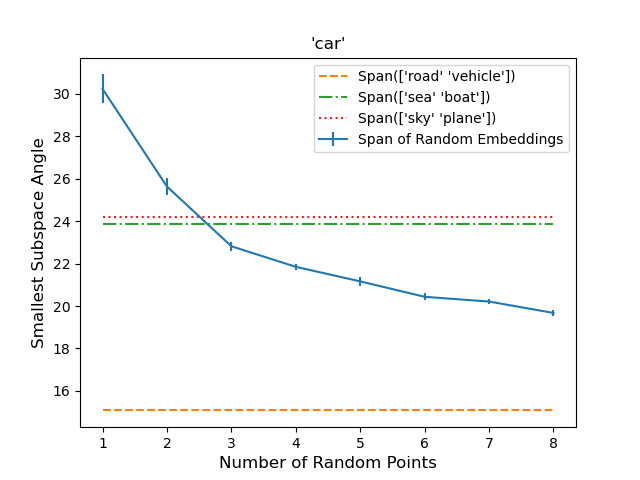}
    \caption{`car'}
    \label{fig:subspace-sub3}
  \end{subfigure}
  \begin{subfigure}{.4\linewidth}
    \centering
    \includegraphics[width=\linewidth]{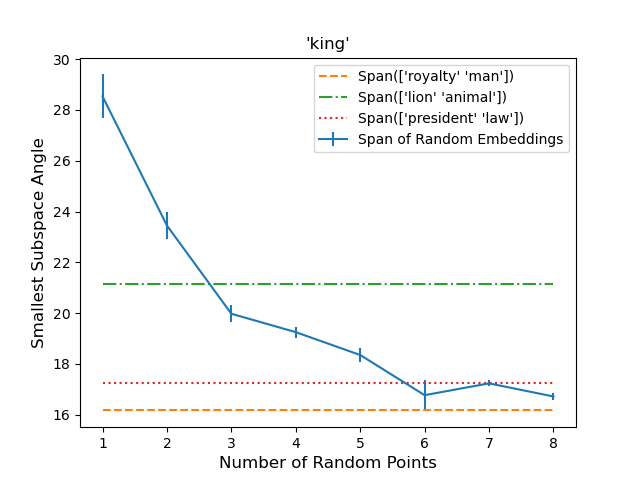}
    \caption{`king'}
    \label{fig:subspace-sub4}
  \end{subfigure}
   \caption{Linear subspaces spanned by estimated generalized Markov boundaries have smallest subspace angles with linear subspaces spanned by embeddings that best match semantic meanings}
   \label{fig:subspace}
  \end{figure}

\end{document}